\newcommand{\norm}[1]{\left\Vert#1\right\Vert}
\newcommand{\abs}[1]{\left\vert#1\right\vert}
\newcommand{\set}[1]{\left\{#1\right\}}
\newcommand{\Real}{\mathbb R}
\newcommand{\eps}{\varepsilon}
\newcommand{\too}{\rightarrow}
\newcommand{\wt}[1]{\widetilde{#1}} 
\newtheorem*{rep@theorem}{\rep@title}
\newcommand{\newreptheorem}[2]{%
\newenvironment{rep#1}[1]{%
 \def\rep@title{#2 \ref{##1}}%
 \begin{rep@theorem}}%
 {\end{rep@theorem}}}
\newtheorem{theorem}{Theorem}
\newtheorem{lemma}{Lemma}
\newtheorem{proposition}{Proposition}
\newtheorem{definition}{Definition}
\newcommand{\subalign}[1]{%
  \vcenter{%
    \Let@ \restore@math@cr \default@tag
    \baselineskip\fontdimen10 \scriptfont\tw@
    \advance\baselineskip\fontdimen12 \scriptfont\tw@
    \lineskip\thr@@\fontdimen8 \scriptfont\thr@@
    \lineskiplimit\lineskip
    \ialign{\hfil$\m@th\scriptstyle##$&$\m@th\scriptstyle{}##$\crcr
      #1\crcr
    }%
  }
}
\newcommand{\eg}{{e.g.}}
\newcommand{\ie}{{i.e.}}
\newcommand{\yl}[1]{{\color{magenta}{\bf[Yaron:} #1{\bf]}}}
\def\eqref#1{equation~\ref{#1}}
\def\1{\bm{1}}
\def\eps{{\epsilon}}
\def\mX{{\bm{X}}}
\DeclareMathAlphabet{\mathsfit}{\encodingdefault}{\sfdefault}{m}{sl}
\SetMathAlphabet{\mathsfit}{bold}{\encodingdefault}{\sfdefault}{bx}{n}
\newcommand{\tens}[1]{\bm{\mathsfit{#1}}}
\def\tB{{\tens{B}}}
\def\tL{{\tens{L}}}
\def\tW{{\tens{W}}}
\def\tX{{\tens{X}}}
\def\gL{{\mathcal{L}}}
\def\sN{{\mathbb{N}}}
\newcommand{\R}{\mathbb{R}}
\newcommand{\Z}{\mathbb{Z}}
\icmltitlerunning{On the Universality of Invariant Networks}
\begin{document}
	
	\twocolumn[
	\icmltitle{On the Universality of Invariant Networks}
	
	
	
	
	\begin{icmlauthorlist}
		\icmlauthor{Haggai Maron}{We}
		\icmlauthor{Ethan Fetaya}{To,Vec}
		\icmlauthor{Nimrod Segol}{We}
		\icmlauthor{Yaron Lipman}{We}
	\end{icmlauthorlist}
	
	\icmlaffiliation{We}{Department of Computer Science and Applied Mathematics, Weizmann Institute of Science, Rehovot, Israel}
	\icmlaffiliation{To}{Department of Computer Science, University of Toronto, Toronto, Canada}
	\icmlaffiliation{Vec}{Vector Institute}

	\icmlcorrespondingauthor{Haggai Maron}{haggai.maron@weizmann.ac.il}

	\icmlkeywords{Machine Learning}
	
	\vskip 0.3in
	
	]
	
	
	
	\printAffiliationsAndNotice{}  
	
	\begin{abstract}
		
		Constraining linear layers in neural networks to respect symmetry transformations from a group $G$ is a common design principle for invariant networks that has found many applications in machine learning.		
		In this paper, we consider a fundamental question that has received little attention to date: Can these networks approximate any (continuous) invariant function? 
		We tackle the rather general case where $G\leq S_n$ (an arbitrary subgroup of the symmetric group) that acts on $\R^n$ by permuting coordinates. This setting includes several recent popular invariant networks. We present two main results: First, $G$-invariant networks are universal if high-order tensors are allowed. Second, there are groups $G$ for which higher-order tensors are unavoidable for obtaining universality. 
		$G$-invariant networks consisting of only first-order tensors are of special interest due to their practical value. We conclude the paper by proving a necessary condition for the universality of $G$-invariant networks that incorporate only first-order tensors.

	\end{abstract}
	
	\section{Introduction}
	The basic paradigm of deep neural networks is repeatedly composing "layers" of linear functions with non-linear, entrywise activation functions to create effective predictive models for learning tasks of interest. 
	
	When trying to learn a function (task) $f$ that is known to be invariant to some group of symmetries $G$ (\ie, $G$-invariant function) it is common to use linear layers that respect this symmetry, namely, invariant and/or equivariant linear layers. Networks with invariant/equivariant linear layers with respect to some group $G$ will be referred here as $G$-\emph{invariant networks}.
	
	A fundamental question in learning theory is that of \emph{approximation} or \emph{universality} \cite{cybenko1989approximation,hornik1991approximation}. In the invariant case: Can a $G$-invariant network approximate an arbitrary continuous $G$-invariant function? 
	
	The goal of this paper is to address this question for \emph{all} finite permutation groups $G\leq S_n$, where $S_n$ is the symmetric group acting on $[n]=\set{1,2,\ldots,n}$. Note that this is a fairly general setting that contains many useful examples (detailed below).
	
	The archetypal example of $G$-invariant networks is Convolutional Neural Networks (CNNs) \cite{lecun1989backpropagation,krizhevsky2012imagenet} that restrict their linear layers to convolutions in order to learn image tasks that are translation invariant or equivariant \footnote{It is common to use convolutional layers without cyclic padding which implies that these networks are not precisely translation invariant. }.

	In recent years researchers are considering other types of data and/or symmetries and consequently new $G$-invariant networks have emerged. Tasks involving point clouds or sets are in general invariant to the order of the input and therefore permutation invariance/equivariance was developed \cite{qi2017pointnet,zaheer2017deep}. Learning tasks involving interaction between different sets, where the input data is tabular, require dealing with different permutations acting independently on each set \cite{hartford2018deep}. Tasks involving graphs and hyper-graphs lead to symmetries defined by tensor products of permutations \cite{kondor2018covariant,maron2018invariant}. A general treatment of invariance/equivariance to finite subgroups of the symmetric group is discussed in \cite{Ravanbakhsh2017};  infinite symmetries are discussed in general in \cite{Kondor2018a} as well as in  \cite{cohen2016group,Cohen2016,Welling2018,Weiler2018}.
	
	Among these examples, universality is known for point-clouds networks and sets networks \cite{qi2017pointnet,zaheer2017deep}, as well as networks invariant to
	finite translation groups (\eg, cyclic convolutional neural networks) \cite{yarotsky2018universal}. However, universality is not known for tabular and multi-set networks \cite{hartford2018deep}, graph and hyper-graph networks \cite{kondor2018covariant,maron2018invariant}; and networks invariant to finite translations with rotations and/or reflections. We cover all these cases in this paper. 
	
	Maybe the most related work to ours is \cite{yarotsky2018universal} that considered actions of compact groups and suggested provably universal architectures that are based on polynomial layers. In contrast, we study the standard and widely used linear layer model. 
	
	The paper is organized as follows:
	First, we prove that an arbitrary continuous function $f:\Real^n\too \Real$ invariant to an arbitrary permutation group $G\leq S_n$ can be approximated using a $G$-invariant network. The proof is constructive and makes use of linear equivariant layers between tensors $\tX\in\Real^{n^k}$ of order $k\leq d$, where $d$ depends on the permutation group $G$. 
	
	Second, we prove a lower bound on the order $d$ of tensors used in a $G$-invariant network so to achieve universality. Specifically, we show that for $G=A_n$ (the alternating group) any $G$-invariant network that uses tensors of order at-most $d=(n-2)/2$  cannot approximate arbitrary $G$-invariant functions. 
	
	We conclude the paper by considering the question: For which groups $G\leq S_n$, $G$-invariant networks using only first order tensors are universal? We prove a necessary condition, and describe families of groups for which universality cannot be attained using only first order tensors.

	

	\section{Preliminaries and main results}
	\label{s:prelim}
	
	The symmetries we consider in this paper are arbitrary subgroups of the symmetric group, \ie,   $G\leq S_n$. The action of $G$ on $x\in \Real^n$ used in this paper is defined as
	\begin{equation}
	g\cdot x = (x_{g^{-1}(1)},\ldots,x_{g^{-1}(n)}), \ g\in G.
	\end{equation}
	The action of $G$ on \emph{tensors} $\tX\in\Real^{n^k \times a}$ (the last index, denoted $j$ represents feature depth) is defined similarly by 
	\begin{equation}\label{e:tensor_equivariance}
	(g\cdot \tX)_{i_1\ldots i_k,j} = \tX_{g^{-1}(i_1)\ldots g^{-1}(i_k),j}, \ g\in G.	
	\end{equation}

	\begin{wraptable}[5]{r}{0.55\columnwidth}
		\vspace{-10pt}\hspace{-15pt}
		\includegraphics[width=0.29\textwidth]{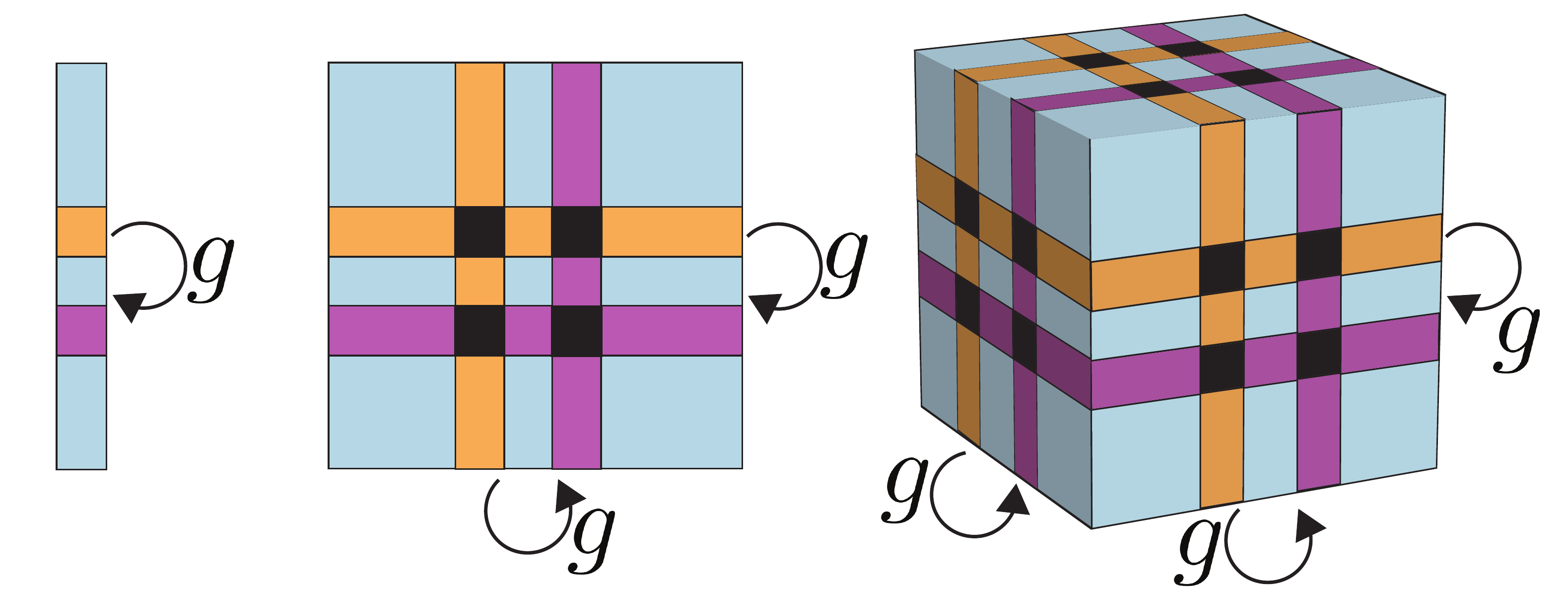}	
	\end{wraptable}
	The inset illustrates this action on tensors of order $k=1,2,3$: the permutation $g$ is a transposition of two numbers and is applied to each dimension of the tensor.
	
	%
	%
	\begin{definition}\label{def:G_invariant_f}
		A $G$-invariant function is a function $f:\Real^n\too\Real$ that satisfies $f(g\cdot x) = f(x)$ for all $x\in\Real^n$ and $g\in G$. 	
	\end{definition}
	
	\begin{definition}\label{def:equi_inv_layer}
		A linear equivariant layer is an affine map $L:\Real^{n^k\times a}\too \Real^{n^{l}\times b}$ satisfying $L(g\cdot \tX) = g\cdot L(\tX)$, for all $g\in G$, and $\tX\in\Real^{n^k\times a}$. An invariant linear layer is an affine map $h:\Real^{n^k\times a}\too \Real^b$ satisfying $h(g\cdot \tX)=h(\tX)$, for all $g\in G$, and $\tX\in\Real^{n^k\times a}$.
	\end{definition}

	A common way to construct $G$-invariant networks is:
	\begin{definition}\label{def:G_invariant_networks}
		A $G$-invariant network is a function $F:\Real^{n\times a} \too\Real$ defined as  $$F= m\circ h\circ L_d\circ\sigma\circ\dots\circ\sigma\circ L_1,$$
		where $L_i$ are linear $G$-equivariant layers, $\sigma$ is an activation function \footnote{We assume any activation function for which the universal approximation theorem for MLP holds, \eg, ReLU and sigmoid.}, $h$ is a $G$-invariant layer, and $m$ is a Multi-Layer Perceptron (MLP).
	\end{definition}
	
	\begin{figure}[t!]
		\centering
		
		\includegraphics{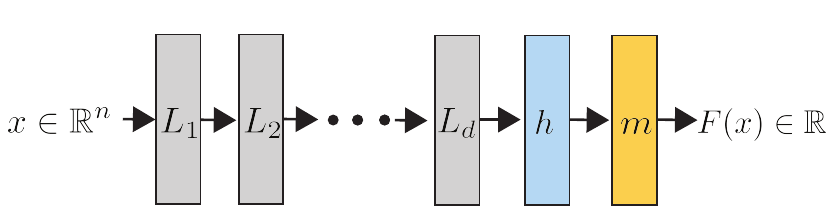} 
		\caption{ Illustration of invariant network architecture. The function is composed of multiple linear $G$-equivariant layers (gray), possibly of high order, and ends with a linear $G$-invariant function (light blue) followed by a Multi Layer Perceptron (yellow).}\label{f:network}
	\end{figure}
	
	Figure \ref{f:network} illustrates the $G$-invariant network model. By construction, $G$-invariant networks are $G$-invariant functions (note that entrywise activation is equivariant as-well). This framework has been used, with appropriate group $G$, in previous works to build predictive $G$-invariant models for learning.

	Our goal is to show the \emph{approximation power} of $G$-invariant networks. Namely, that $G$-invariant networks can approximate arbitrary continuous $G$-invariant functions $f$. Without loss of generality, we consider only functions of the form $f:\Real^n\too \Real$. Indeed, in case of multiple features, $\Real^{n\times a}$, we rearrange the input as $\Real^{n'}$, $n'=na$, and take the appropriate $G'\leq S_{n'}$. We prove:
	\begin{theorem}\label{thm:universality}
		Let $f:\R^n\too\R$ be a continuous $G$-invariant function for some $G\leq S_n$, and $K\subset \R^n$ a compact set. There exists a $G$-invariant network that approximates $f$ to an arbitrary precision. 
	\end{theorem}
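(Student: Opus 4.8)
The plan is to reduce the claim, via the Stone--Weierstrass theorem, to \emph{exactly} representing finitely many generators of the ring of $G$-invariant polynomials, and then to build a $G$-invariant network that computes (an approximation of) these generators using high-order equivariant layers.

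\emph{Step 1: Reduction to invariant polynomials.} First I would recall that the ring $\R[x_1,\ldots,x_n]^G$ is finitely generated (Noether/Hilbert), say by $p_1,\dots,p_r$, each the $G$-symmetrization of an ordinary polynomial and of degree bounded by some $d=d(G)$ (e.g.\ Noether's bound $d\le |G|$). The $p_j$ separate $G$-orbits: given $x,y$ in distinct orbits, averaging over $G$ an ordinary polynomial equal to $1$ on the finite orbit of $x$ and $0$ on that of $y$ yields a $G$-invariant polynomial --- hence a polynomial in $p_1,\dots,p_r$ --- taking different values at $x$ and $y$. Thus $P=(p_1,\dots,p_r)\colon K\to\R^r$ descends to a continuous injection of the orbit space, so $C(K)^G\cong C(P(K))$, and since $P(K)\subset\R^r$ is compact, Stone--Weierstrass gives that every continuous $G$-invariant $f$ is uniformly approximated on $K$ by some polynomial $Q(p_1(x),\dots,p_r(x))$. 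It therefore suffices to realize each $p_j$ by the network and let the final MLP $m$ approximate $Q$.

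\emph{Step 2: A generator as a contraction of a power tensor.} Splitting each $p_j$ into homogeneous parts (still $G$-invariant), it is enough to handle a homogeneous $G$-invariant polynomial of degree $k\le d$. Any such polynomial equals $\langle w,\rho_k(x)\rangle$, where $\rho_k(x)=x^{\otimes k}\in\R^{n^k}$ is the order-$k$ power tensor, $(\rho_k(x))_{i_1\dots i_k}=x_{i_1}\cdots x_{i_k}$, for some $w\in\R^{n^k}$; averaging $w$ over $G$ does not change the value, so $w$ may be taken to be a $G$-invariant tensor. Contraction against a fixed $G$-invariant tensor is exactly a $G$-invariant linear layer in the sense of Definition \ref{def:equi_inv_layer}, so the remaining task is to produce the tensors $\rho_0(x),\dots,\rho_d(x)$ with equivariant layers.

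\emph{Step 3: Building the power tensors, and the main obstacle.} I would construct $\rho_k$ inductively from $\rho_k(x)=\rho_{k-1}(x)\otimes x$: the ``broadcast'' maps copying an order-$(k-1)$ tensor into an order-$k$ tensor along a new axis, and $x\mapsto(x_{i_k})_{i_1\dots i_k}$, are linear and $G$-equivariant, so the only nonlinear ingredient is entrywise multiplication, which I would obtain by polarization, $ab=\tfrac12[(a+b)^2-a^2-b^2]$, plus an approximation of $t\mapsto t^2$ on a bounded interval. Here is the point that needs care: between equivariant layers one may insert further equivariant layers acting diagonally on the feature dimension --- i.e.\ an ordinary MLP applied identically at every tensor entry, which is equivariant --- so the assumed universal approximation property of $\sigma$ lets us approximate $t\mapsto t^2$, and hence multiplication, to any accuracy, with errors made uniform because compactness of $K$ bounds all intermediate tensors. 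Chaining these gadgets yields an equivariant sub-network outputting an arbitrarily good approximation of $(\rho_0(x),\dots,\rho_d(x))$, after which the $G$-invariant layer of Step 2 and the MLP $m$ of Step 1 complete the construction; a final composition-of-errors estimate (finitely many gadgets, all on bounded domains) controls the total approximation error. The genuinely substantive steps are the reduction in Step 1 --- which is what pins the tensor order $d$ to $G$ --- and checking that multiplication is realizable within the restrictive layer model of Definition \ref{def:G_invariant_networks}; the rest is routine bookkeeping.
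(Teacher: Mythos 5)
Your proposal is correct and follows essentially the same route as the paper: a Stone--Weierstrass reduction to $G$-invariant polynomials (with Noether's theorem bounding the tensor order by a constant $d(G)$), the observation that a homogeneous invariant of degree $k$ is the contraction of $x^{\otimes k}$ against a $G$-invariant coefficient tensor (the paper's fixed-point equation and $k$-class basis), an equivariant lift to order-$k$ tensors whose entries are multiplied by an MLP acting on the feature dimension, and a final $G$-invariant contraction/summation fed to an MLP. The differences are cosmetic: the paper realizes the $k$-fold product with a single $k$-ary MLP rather than your iterated polarization gadget, invokes Noether only afterwards (in its bounded-order construction) rather than up front, and spells out the order-padding and channel-concatenation bookkeeping you defer to ``routine bookkeeping'' (its Lemma \ref{lem:unifying_G}).
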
  
	
	The proof of Theorem \ref{thm:universality} is constructive and builds an $f$-approximating $G$-invariant network with hidden tensors $\tX\in \Real^{n^d}$ of order $d$,  where $d=d(G)$ is a natural number depending on the group $G$. Unfortunately, we show that in the worst case $d$ can be as high as $\frac{n(n-1)}{2}$. Note that $d=2$ could  already be computationally challenging. It is therefore of interest to ask whether there exist more efficient $G$-invariant networks that use lower order tensors without sacrificing approximation power. Surprisingly, the answer is that in general we can not go lower than order $n$ for general permutation groups $G$. Specifically, we prove the following for $G=A_n$, the alternating group:	
	\begin{theorem} \label{thm:lower_bound}
		If an $A_n$-invariant network has the universal approximation property then it consists of tensors of order at least $\frac{n-2}{2}$.
	\end{theorem}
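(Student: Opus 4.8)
The plan is to prove the contrapositive: if every tensor appearing in an $A_n$-invariant network has order at most $d$ with $d\le\frac{n-2}{2}$, then the network computes an $S_n$-invariant function, and $S_n$-invariant functions are too few to approximate every continuous $A_n$-invariant function.

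The central step is a statement about orbits. As in the characterization of linear equivariant layers underlying Theorem~\ref{thm:universality} (and in prior work), the affine $G$-equivariant maps $\Real^{n^k\times a}\too\Real^{n^l\times b}$ are parametrized — uniformly in the feature dimensions $a,b$ — by the $G$-orbits of $[n]^{k+l}$ together with the $G$-orbits of $[n]^l$ (the bias term), and a $G$-invariant layer $\Real^{n^k\times a}\too\Real^b$ is parametrized by the $G$-orbits of $[n]^k$; concretely the relevant coefficient space is the space of $G$-fixed tensors on the corresponding index set. I would then establish the combinatorial fact that \emph{for every $m\le n-2$ the $A_n$-orbits and the $S_n$-orbits on $[n]^m$ coincide}. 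Fix $w\in[n]^m$; it uses $t\le m\le n-2$ distinct coordinate values, so at least two values of $[n]$ do not occur in $w$, and the transposition $\tau_0$ exchanging two such missing values is an odd permutation fixing $w$. Hence for any $\pi\in S_n$ exactly one of $\pi,\pi\tau_0$ lies in $A_n$, and the two act identically on $w$, so the $S_n$-orbit of $w$ does not split. Consequently, whenever $k+l\le n-2$ every $A_n$-equivariant layer between tensors of orders $k$ and $l$ is already $S_n$-equivariant, and whenever $k\le n-2$ every $A_n$-invariant layer on order-$k$ tensors is already $S_n$-invariant, since the defining spaces of $G$-fixed tensors are literally the same.

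Now suppose all orders in $F=m\circ h\circ L_d\circ\sigma\circ\cdots\circ\sigma\circ L_1$ are at most $d\le\frac{n-2}{2}$. Each equivariant $L_i$ goes between tensor orders whose sum is at most $2d\le n-2$, hence is $S_n$-equivariant; the invariant layer $h$ acts on tensors of order at most $d\le n-2$, hence is $S_n$-invariant; the entrywise activation $\sigma$ is $S_n$-equivariant, and the final MLP $m$ preserves invariance. Therefore $F$ is a continuous $S_n$-invariant function. To conclude, I would use the Vandermonde polynomial $P(x)=\prod_{i<j}(x_i-x_j)$, which satisfies $P(g\cdot x)=\sign(g)\,P(x)$, and is thus continuous and $A_n$-invariant but changes sign under any transposition. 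Take $K$ a closed ball containing $x_0=(1,2,\ldots,n)$, and fix a transposition $\tau$; then $\tau\cdot x_0\in K$, $P(x_0)\ne 0$, $P(\tau\cdot x_0)=-P(x_0)$, whereas any $S_n$-invariant $g$ satisfies $g(x_0)=g(\tau\cdot x_0)$, so $\sup_{x\in K}\abs{P(x)-g(x)}\ge\tfrac12\abs{P(x_0)-P(\tau\cdot x_0)}=\abs{P(x_0)}>0$. Hence no $S_n$-invariant function — in particular no $A_n$-invariant network all of whose tensors have order $\le\frac{n-2}{2}$ — approximates $P$ on $K$, so a universal $A_n$-invariant network must contain a tensor of order exceeding $\frac{n-2}{2}$, which gives the stated bound.

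The step I expect to be the real work is the orbit lemma — pinning down exactly when an $S_n$-orbit on $[n]^m$ fails to split under $A_n$ (namely, precisely when the tuple exhausts all but at most one value of $[n]$), since this is what fixes the threshold at $\frac{n-2}{2}$. The reduction of a low-order $A_n$-network to an $S_n$-invariant function and the separation estimate for $P$ are then routine.
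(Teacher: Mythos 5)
Your proposal is correct and follows essentially the same route as the paper: show that when the tensor orders of a layer sum to at most $n-2$ the $A_n$- and $S_n$-equivariant (and invariant) layer spaces coincide, conclude that a low-order $A_n$-invariant network is in fact $S_n$-invariant, and then separate with the Vandermonde polynomial. Your direct orbit argument --- a transposition of two values missing from an index tuple fixes the tuple while flipping parity --- is just a self-contained version of the paper's appeal to the $(n-2)$-transitivity of $A_n$; both hinge on the same two spare symbols.
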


	
	
	Although in general we cannot expect universal approximation of $G$-invariant networks with inner tensor order smaller than $\frac{n-2}{2}$, it is still possible that for \emph{specific} groups of interest we can prove approximation power with more efficient (\ie, lower order inner tensors) $G$-invariant networks. Of specific interest are $G$-invariant networks that use only first order tensors. In section \ref{s:first_order} we prove the following necessary condition for universality of first-order $G$-invariant networks:
	\begin{theorem}\label{thm:necessary}	
		Let $G\leq S_n$. If first order $G$-invariant networks are universal, then $\left| [n]^2/H\right| < \left| [n]^2/G\right| $ for any strict super-group $G < H \leq S_n$. 
	\end{theorem}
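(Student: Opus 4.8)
The plan is to prove the contrapositive, and the whole argument rests on the standard description of first‑order equivariant layers for a permutation group: a linear map $L:\R^{n}\too\R^{n}$ is $G$‑equivariant iff its matrix is constant on the orbits of the induced $G$‑action on index pairs $[n]^2$, so the space of such layers has dimension exactly $|[n]^2/G|$ (with feature channels and an affine bias term one gets $ab\,|[n]^2/G| + b\,|[n]/G|$), and likewise an invariant linear layer $h:\R^n\too\R^b$ is governed by $|[n]/G|$; see \cite{maron2018invariant,Ravanbakhsh2017}. I would first record the trivial monotonicity: if $G<H\le S_n$, then any $H$‑equivariant layer is in particular $G$‑equivariant, so the space of first‑order $G$‑equivariant layers \emph{contains} that of $H$, and consequently $|[n]^2/H|\le|[n]^2/G|$, with the same inclusion and inequality for invariant layers.

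Now suppose the conclusion of Theorem~\ref{thm:necessary} fails: there is a strict super‑group $G<H\le S_n$ with $|[n]^2/H|\not<|[n]^2/G|$, hence, by the monotonicity above, $|[n]^2/H|=|[n]^2/G|$. The key step is to upgrade this equality of \emph{counts} to an equality of \emph{layer spaces}. For the equivariant layers this is immediate: a finite‑dimensional subspace of equal dimension must be the whole space, so the $G$‑ and $H$‑equivariant (affine) layers coincide once we also know the bias dimensions match. For that, observe that since $G\le H$ the partition of $[n]^2$ into $G$‑orbits refines the partition into $H$‑orbits; a refinement with the same number of blocks is the same partition; restricting to the diagonal $\{(i,i):i\in[n]\}$, which both groups preserve setwise, forces $|[n]/G|=|[n]/H|$, so the invariant layers and the biases coincide as well. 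Therefore every first‑order $G$‑invariant network — equivariant layers, an invariant layer, then an MLP, with pointwise activations throughout — is, verbatim, a first‑order $H$‑invariant network, and hence computes an $H$‑invariant function on $\R^n$.

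It then remains to exhibit a continuous $G$‑invariant function that no $H$‑invariant function can approximate on a suitable compact set. I would pick $x_0\in\R^n$ with pairwise‑distinct coordinates (so its $S_n$‑stabilizer is trivial) and some $h\in H\setminus G$. Genericity gives $h\cdot x_0\notin G\cdot x_0$: otherwise $h^{-1}g$ would fix $x_0$ for some $g\in G$, forcing $h=g\in G$. The orbits $G\cdot x_0$ and $G\cdot(h\cdot x_0)$ are then disjoint finite sets, so by Urysohn's lemma (or an explicit bump) followed by averaging over $G$ one obtains a continuous $G$‑invariant $f$ with $f\equiv 1$ on $G\cdot x_0$ and $f\equiv 0$ on $G\cdot(h\cdot x_0)$. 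On any compact $K$ containing $x_0$ and $h\cdot x_0$, any $H$‑invariant $F$ has $F(x_0)=F(h\cdot x_0)$, whence $\max\{|F(x_0)-f(x_0)|,\,|F(h\cdot x_0)-f(h\cdot x_0)|\}\ge\tfrac12$; so $f$ is not approximable to precision $<\tfrac12$ by any $H$‑invariant function, contradicting the assumed universality of first‑order $G$‑invariant networks.

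I expect the main obstacle to be the middle step — turning the purely numerical identity $|[n]^2/G|=|[n]^2/H|$ into the genuine collapse of the two architectures, i.e. equality of the full affine equivariant \emph{and} invariant layer spaces, uniformly over feature channels and bias terms. It is this algebraic rigidity, rather than the essentially routine separation/Urysohn argument of the last paragraph or the (standard) orbit‑dimension formula, that explains why $|[n]^2/\cdot|$ is the correct combinatorial obstruction; the bookkeeping for channels and biases via the diagonal‑refinement observation is where the care is needed.
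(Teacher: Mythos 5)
Your proof is correct and follows essentially the same route as the paper's: equality of orbit counts forces the $G$- and $H$-equivariant/invariant layer spaces to coincide, so every first order $G$-invariant network is $H$-invariant, and a $G$-invariant but not $H$-invariant continuous separating function (built exactly as in the paper's Lemma \ref{l:separating_function}) yields the contradiction. Your refinement-of-partitions argument deducing $\left|[n]/G\right|=\left|[n]/H\right|$ from $\left|[n]^2/G\right|=\left|[n]^2/H\right|$ via the diagonal is a welcome explicit justification of a step the paper only asserts.
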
 
	$|[n]^2/G|$ is the number of equivalence classes of $[n]^2$ defined by the relation: $(i_1,i_2) \sim (j_1,j_2)$ if $j_\ell=g(i_\ell)$, $\ell=1,2$ for some $g\in G$. Intuitively, this condition asks that super-groups of $G$ have \emph{strictly} better separation of the double index space $[n]^2$. 
	
	\section{$G$-invariant networks universality}
	
	The key to showing theorem \ref{thm:universality}, namely that $G$-invariant networks are universal, is showing they can approximate a set of functions that are: (i) $G$-invariant; and (ii) can approximate arbitrary $G$-invariant functions to a desired precision. The $G$-invariant polynomials are an example of such a set: 
	\begin{definition}
		The $G$-invariant polynomials are all the polynomials in $x_1,\dots,x_n$ over $\Real$ that are also $G$-invariant functions. They are denoted  	$\Real[x_1,\ldots,x_n]^G$, where $\Real[x_1,\ldots,x_n]$ is the set of all polynomials over $\Real$.
	\end{definition}
	To see that $G$-invariant polynomials can approximate any arbitrary \emph(continuous) function $f:K\subset \Real^n \too \Real$, where $K$ is a compact set, one can use the Stone-Weiestrass (SW) theorem, as done in \cite{yarotsky2018universal}: First use SW to approximate $f$ over a symmetrized domain $K'=\cup_{g\in G} g\cdot K$ by \emph{some} (not necessarily $G$-invariant) polynomial $p\in\Real[x_1,\ldots,x_n]$. Second, consider $$q(x)=\frac{1}{\abs{G}}\sum_{g\in G} p(g\cdot x).$$ $q$ is a $G$-invariant polynomial and hence $$q\in \Real[x_1,\ldots,x_n]^G,$$ furthermore for $x\in K$:
	\begin{align*}
	&	\abs{q(x)-f(x)} \leq \\ 	&
	\frac{1}{\abs{G}}\sum_{g\in G} \big | p(g\cdot x) - f(g\cdot x) \big | \leq \max_{x\in K'} \abs{p(x)-f(x)}.
	\end{align*}
	Our goal in this section is to prove the following proposition that, together with the comment above, prove theorem \ref{thm:universality}:
	\begin{proposition}\label{thm:main}
		For any $\epsilon>0$, $K\subset\Real^n$ compact set, and $G$-invariant polynomial $p\in\Real[x_1,\ldots,x_n]^G$ there exists a $G$-invariant network $F$ that approximates $p$ to an $\epsilon$-accuracy, namely $\max_{x\in K}\abs{F(x)-p(x)}<\epsilon$. 
	\end{proposition}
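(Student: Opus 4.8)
The plan is to realize an arbitrary $G$-invariant polynomial $p\in\Real[x_1,\ldots,x_n]^G$ as the output of a $G$-invariant network by building it up monomial-by-monomial and degree-by-degree, using the equivariant linear layers to transport information between tensor orders while letting the pointwise activations (via the MLP universality theorem) synthesize the needed nonlinear combinations. Concretely, suppose $p$ has degree $\le D$. I would first observe that it suffices to approximate each $G$-invariant homogeneous component separately, and in fact it suffices to approximate the $G$-orbit sums of monomials, since $p$ is a linear combination of such orbit sums. So the real task is: for a monomial $m(x)=x_{i_1}\cdots x_{i_k}$ with $k\le D$, approximate the symmetrization $\frac1{|G|}\sum_{g\in G} m(g\cdot x)$ by a $G$-invariant network whose hidden tensors have order $d=d(G)$ for a suitable $d$.

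The key construction is to encode a degree-$k$ monomial (equivalently a multi-index $(i_1,\dots,i_k)\in[n]^k$) as a rank-one tensor of order $k$, namely $\tX_{i_1\ldots i_k} = x_{i_1}\cdots x_{i_k}$, i.e. the $k$-fold outer product of $x$ with itself. The outer product $x\mapsto x^{\otimes k}$ is itself $G$-equivariant, and — crucially — each entry of $x^{\otimes k}$ is obtained from $x$ by a sequence of steps that alternate a linear equivariant map (to duplicate/broadcast coordinates into a higher-order tensor) with a pointwise multiplication; since pointwise multiplication of two scalars can be approximated by an MLP applied entrywise (by the MLP universal approximation theorem, valid on the compact set $K$ where all coordinates are bounded), this whole map can be approximated by the equivariant-linear / activation sandwich that a $G$-invariant network allows. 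Once $\tX=x^{\otimes k}$ is formed, summing all entries of $\tX$ is an invariant linear functional $h$, and summing only the entries in a prescribed $G$-orbit of multi-indices is \emph{also} an invariant linear functional, because the indicator of a $G$-orbit in $[n]^k$ is a $G$-invariant tensor and contraction against it commutes with the $G$-action. Choosing that orbit to be the orbit of $(i_1,\dots,i_k)$ yields exactly the symmetrized monomial (up to the multiplicity of the orbit, a constant we absorb). Taking $d=D$ (or, after a sharper bookkeeping argument, the smaller group-dependent value $d(G)$ claimed in the remark following the theorem) and stacking the finitely many monomials of $p$ as feature channels, a final MLP $m$ forms the required linear combination of the orbit sums, giving $F$ with $\max_{x\in K}|F(x)-p(x)|<\epsilon$.

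In carrying this out I would order the steps as: (1) reduce $p$ to a finite linear combination of $G$-orbit-sums of monomials; (2) show $x\mapsto x^{\otimes k}$ is $(\epsilon\text{-})$approximable within the $G$-invariant network model, by exhibiting the explicit equivariant "broadcast" maps and replacing each needed product by an entrywise MLP — here one tracks error propagation through the at most $D$ multiplication stages, using boundedness on $K$ and Lipschitz control so that the composition error stays below $\epsilon$; (3) identify the $G$-orbit-sum of a monomial with contraction of $x^{\otimes k}$ against the (invariant) orbit-indicator tensor, hence an invariant linear layer $h$; (4) run all finitely many monomials in parallel as feature channels, keeping the tensor order at $d=d(G)$, and close with the MLP $m$. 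The main obstacle is step (2): a single $G$-invariant network has its nonlinearities only in the fixed activation $\sigma$ between linear equivariant layers and in the terminal MLP, so one must be careful that forming the high-degree monomial $x_{i_1}\cdots x_{i_k}$ — an inherently degree-$k$ operation — can indeed be assembled from these limited ingredients; the resolution is the standard trick that $\sigma$-MLPs approximate multiplication on compacta, but threading this through the equivariant layers while keeping everything $G$-equivariant (so that no approximation error breaks the invariance of the final output, which is automatic once the architecture is fixed) and controlling the accumulated error across the $k\le D$ stages is the delicate part. A secondary subtlety is justifying the sharper group-dependent bound $d(G)$ rather than the naive $d=D$; this requires noting that the relevant information is not the full tensor $x^{\otimes k}$ but only its contractions against $G$-invariant tensors, and the space of such invariants stabilizes at order $d(G)$, so one never needs order higher than $d(G)$ even when $D$ is large.
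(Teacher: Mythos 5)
Your proposal is correct and follows essentially the same route as the paper: decompose $p$ into $G$-orbit sums of monomials (the paper's $k$-classes $\tau$ and basis polynomials $p^\tau$), lift $x$ to an order-$k$ tensor by linear equivariant broadcast maps, approximate the needed products entrywise with an MLP, sum over the orbit via an invariant linear layer, and stack the finitely many pieces as feature channels before a final MLP. The only organizational difference is that the paper applies a single MLP approximating the $k$-ary product $(y_1,\ldots,y_k)\mapsto\prod_{i=1}^k y_i$ along a $k$-channel feature dimension, with the broadcast already restricted to the class $\tau$, which sidesteps the multi-stage error propagation you would otherwise have to track when building $x^{\otimes k}$ by iterated pairwise multiplications.
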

	
	The proposition will be proved in several steps:
	\begin{enumerate}[(i)]
		
		\item  We represent $p$ as $p(x)=\sum_{k=0}^d p_k(x)$, where $p_k$ is a $G$-invariant \emph{homogeneous} polynomial of degree $k$, \ie, $p_k\in\Real_k[x_1,\ldots,x_n]^G$.	
		
		\item  We characterize all homogeneous $G$-invariant polynomials of a fixed degree $k$. In particular we find a basis to all such polynomials, $b_{k1},b_{k2},\dots,b_{k n_k}\in \Real_k[x_1,\ldots,x_n]^G$. Using the bases of homogeneous $G$-invariant polynomials of degrees up-to $d$ we write 
		\begin{equation}\label{e:p_decomposition}
		p(x)=\sum_{k=0}^d\sum_{j=1}^{n_k}\alpha_{kj}b_{kj}(x).
		\end{equation}
		
		\item  We approximate each basis element $b_{kj}$ using a $G$-invariant network. 
		
		\item  We construct a $G$-invariant network $F$ approximating $p$ to an $\epsilon$-accuracy using Equation \ref{e:p_decomposition} and (iii).
	\end{enumerate}

	\subsection{Proof of proposition \ref{thm:main}}
	\paragraph{Part (i):}  It is a known fact that a $G$-invariant polynomial can be written as a sum of homogeneous $G$-invariant polynomials \cite{kraft2000classical}:
	\begin{lemma}\label{l:homog_invariant}
		Let $p:\R^n\too\R$ be a $G$-invariant polynomial of degree $d$. Then $p$ can be written as $p(x)=\sum_{k=0}^{d}p_k(x)$ where $p_k$ are homogeneous $G$-invariant polynomials of degree $k$. 
	\end{lemma}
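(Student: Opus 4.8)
The plan is to use the standard decomposition of a polynomial into its homogeneous components and to observe that the coordinate‑permutation action of $G$ respects this decomposition. Concretely, I would write $p = \sum_{k=0}^d p_k$, where $p_k$ collects exactly the monomials of total degree $k$ appearing in $p$; such a decomposition exists for any polynomial, and it is \emph{unique} because monomials of distinct total degrees are linearly independent. By construction each $p_k$ is homogeneous of degree $k$, so all that remains is to check that each $p_k$ is $G$-invariant.

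The key observation is that the $G$-action on $\Real^n$ commutes with scalar dilation: $g\cdot(tx) = t\,(g\cdot x)$ for every $t\in\Real$, $x\in\Real^n$, $g\in G$, since permuting coordinates and rescaling all coordinates by $t$ are interchangeable. Hence, fixing $x$ and $g$, the univariate polynomial identity $\sum_{k=0}^d t^k\, p_k(g\cdot x) = p\bigl(t\,(g\cdot x)\bigr) = p\bigl(g\cdot(tx)\bigr) = p(tx) = \sum_{k=0}^d t^k\, p_k(x)$ holds for all $t\in\Real$, using $G$-invariance of $p$ in the middle step. Comparing coefficients of $t^k$ (legitimate since the identity holds for all $t$) gives $p_k(g\cdot x) = p_k(x)$ for every $k$, $x$ and $g$, i.e.\ each $p_k\in\Real_k[x_1,\ldots,x_n]^G$. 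Equivalently, one can extract $p_k(x) = \frac{1}{k!}\left.\frac{d^k}{dt^k}\right|_{t=0} p(tx)$ and push the invariance through the differentiation, or invoke the Reynolds operator $R(q)(x) = \frac{1}{\abs{G}}\sum_{g\in G} q(g\cdot x)$: since $G$ acts linearly it sends degree-$k$ homogeneous polynomials to degree-$k$ homogeneous polynomials, so $R$ preserves each homogeneous component, and applying $R$ to $p=\sum_k p_k$ together with $R(p)=p$ and uniqueness of the homogeneous decomposition forces $p_k = R(p_k)$, which is manifestly $G$-invariant.

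I do not expect any real obstacle here; the statement is elementary and is quoted from the classical invariant‑theory literature (\cite{kraft2000classical}). The only point deserving a moment's care is the uniqueness of the homogeneous decomposition, which justifies the coefficient comparison — but this is immediate from linear independence of monomials of different total degrees. So the "hard part" is essentially just choosing the cleanest of the equivalent arguments above; the dilation‑commutation argument seems most self‑contained and is the one I would present.
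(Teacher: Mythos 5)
Your proof is correct. Note that the paper itself gives no argument for this lemma --- it is quoted as a known fact with a citation to \cite{kraft2000classical} --- so there is nothing in the paper to compare against; your dilation argument (coordinate permutation commutes with scalar rescaling, so comparing coefficients of $t^k$ in $p\bigl(t(g\cdot x)\bigr)=p(tx)$ forces each homogeneous component to be $G$-invariant) is the standard proof and correctly supplies the details the citation leaves implicit.
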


	\paragraph{Part (ii):}
	We need to find bases for the linear spaces of homogeneous $G$-invariant polynomials of degree $k=0,1,\ldots,d$, \ie, $\Real_k[x_1,\ldots,x_n]^G$. 
	Any homogeneous polynomial of degree $k$ can be written as 
	\begin{equation}\label{e:poly}
	p(x)=\sum_{i_1,\ldots,i_k=1}^{n}\tW_{i_1\ldots i_k} \, x_{i_1}\cdots x_{i_k},
	\end{equation}
	where $\tW\in\R^{n^k}$ is its coefficient tensor; since $x_{i_1}\cdots x_{i_k}=x_{i_{\sigma(1)}}\cdots x_{i_{\sigma(k)}}$ for all $\sigma\in S_k$, a unique choice of $\tW$ can be obtained by taking a symmetric $\tW$. That is, $\tW$ that satisfies $\tW_{i_1\cdots i_k}=\tW_{i_{\sigma(1)}\cdots i_{\sigma(k)}}$, for all $\sigma\in S_k$. In short, we ask $\tW \in \text{Sym}^k(\R^n)\subset \Real^{n^k}$. For example, the case $k=2$ amounts to representing a quadratic form using a symmetric matrix, that is $\tW$ satisfies in this case $\tW=\tW^T$. The next proposition shows that if $p$ is $G$-invariant, its coefficient tensor is a fixed point of the action of $G$ on symmetric tensors $\tW\in \R^{n^k}$ :
	\begin{proposition}\label{prop:homogenous_fixed_point} 
		Let $p\in\Real_k[x_1,\ldots,x_n]^G$. Then its coefficient tensor $\tW\in\R^{n^k}$ satisfies the fixed point equation: 
		\begin{equation}\label{e:fixed_point}
		g\cdot \tW = \tW, \ \forall g\in G.	
		\end{equation}
	\end{proposition}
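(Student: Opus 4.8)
The plan is to show that $G$-invariance of $p$ forces the unique symmetric coefficient tensor $\tW$ to be fixed by the $G$-action on $\mathrm{Sym}^k(\R^n)$. First I would observe that the action of $G$ on degree-$k$ homogeneous polynomials is induced by the action on the variables $x_i \mapsto x_{g^{-1}(i)}$, and I would compute how this action transforms the coefficient tensor. Concretely, starting from the representation in Equation \ref{e:poly}, $p(g^{-1}\cdot x) = \sum_{i_1,\ldots,i_k} \tW_{i_1\ldots i_k}\, x_{g(i_1)}\cdots x_{g(i_k)}$; reindexing the summation by $i_\ell \mapsto g^{-1}(i_\ell)$ (a bijection of $[n]$) shows that $p(g^{-1}\cdot x) = \sum_{i_1,\ldots,i_k} \tW_{g^{-1}(i_1)\ldots g^{-1}(i_k)}\, x_{i_1}\cdots x_{i_k}$, so the coefficient tensor of $p(g^{-1}\cdot x)$ is exactly $g\cdot\tW$ in the notation of Equation \ref{e:tensor_equivariance}. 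Moreover $g\cdot\tW$ is again symmetric, since conjugating a symmetric tensor by a coordinate permutation preserves symmetry under $S_k$.

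Next I would use the uniqueness of the symmetric coefficient tensor. Since $p$ is $G$-invariant, $p(g^{-1}\cdot x) = p(x)$ as polynomials, for every $g\in G$. Both $g\cdot\tW$ and $\tW$ are symmetric tensors representing the same homogeneous polynomial, and the paper has already noted that the symmetric representative is unique. Hence $g\cdot\tW = \tW$ for all $g\in G$, which is precisely the fixed-point equation \ref{e:fixed_point}.

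The only real subtlety — and the step I'd be most careful about — is the bookkeeping in the reindexing, in particular tracking whether one gets $g$ or $g^{-1}$ in the final tensor index, and checking that the substitution $g\cdot x$ versus $g^{-1}\cdot x$ is handled consistently with the convention $g\cdot x = (x_{g^{-1}(1)},\ldots,x_{g^{-1}(n)})$. Since $G$ is a group, invariance under all $g$ is equivalent to invariance under all $g^{-1}$, so any such discrepancy is harmless, but I would state the computation cleanly to avoid confusion. A minor point worth a sentence is why symmetry of $\tW$ is preserved: for $\sigma\in S_k$, $(g\cdot\tW)_{i_{\sigma(1)}\ldots i_{\sigma(k)}} = \tW_{g^{-1}(i_{\sigma(1)})\ldots g^{-1}(i_{\sigma(k)})} = \tW_{g^{-1}(i_1)\ldots g^{-1}(i_k)} = (g\cdot\tW)_{i_1\ldots i_k}$, using symmetry of $\tW$ itself. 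This guarantees that we are genuinely comparing two elements of $\mathrm{Sym}^k(\R^n)$ and may invoke uniqueness there. Everything else is routine.
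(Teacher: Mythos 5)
Your proposal is correct and follows essentially the same route as the paper: expand $p(g\cdot x)$ (equivalently $p(g^{-1}\cdot x)$, which is harmless since $G$ is a group), reindex the sum to read off the transformed coefficient tensor, and use the uniqueness of the symmetric representative (the paper phrases this as "equating monomials' coefficients and the symmetry of $\tW$") to conclude $g\cdot\tW=\tW$. Your extra check that $g\cdot\tW$ remains symmetric is a sound, slightly more explicit justification of the same step.
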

	\begin{proof}
		From the fact that  $p$ is $G$-invariant we get the following set of equations $p(x)=p(g\cdot x)$, for all  $g \in G$. 
		\begin{eqnarray*}
			p(x)  & =& p(g\cdot x)\\ 
			&=& \sum_{i_1,\ldots,i_k=1}^n \tW_{i_1\ldots i_k}\, x_{g^{-1}(i_1)}\cdots x_{g^{-1}(i_k)} \\
			&=& \sum_{i_1,\ldots,i_k=1}^n \tW_{g(i_1)\ldots g(i_k)}\, x_{i_1}\cdots x_{i_k}. 
		\end{eqnarray*}
		By equating monomials' coefficients of $p(x)$ and $p(g\cdot x)$ and the symmetry of $\tW$ we get 
		$$\tW_{i_{1}\ldots i_{k}}=\tW_{g(i_{1})\ldots g(i_{k})}.$$
		This implies that $\tW$ satisfies $g\cdot \tW=\tW$ for all $g\in G$.
	\end{proof}
	
	Equation \ref{e:fixed_point} is a linear homogeneous system of equations and therefore the set of solutions $\tW$ forms a linear space. To define a basis for this linear space we first define the following equivalence relation: $(i_1,\ldots,i_k)\sim (j_1,\ldots,j_k)$ if there exists $g\in G$ and $\sigma\in S_k$ so that $j_\ell=g(i_{\sigma(\ell)})$, $\ell=1,\ldots,k$. Intuitively, $g$ takes care of the $G$-invariance while $\sigma$ factors out the fact that the monomials $x_{i_1}\cdots x_{i_k} = x_{\sigma(i_1)}\cdots x_{\sigma(i_k)}$. For example, let $n=5$, $k=3$, $g=(2 3)(4 5)$, $\sigma=(2 3)$ (we use cycle notation), then we have: $(2,2,4) \sim (3,5,3)$.  
	The equivalence classes are denoted $\tau$ and called the \emph{$k$-classes}.
	%
	%
	We show:
	\begin{proposition}
		The set of polynomials 
		\begin{equation}\label{e:indicator_poly}
		p^\tau(x) = \sum_{(i_1,\dots,i_k)\in\tau} x_{i_1}\cdots x_{i_k},
		\end{equation}
		where $\tau$ is a $k$-class, form a basis to $\Real_k[x_1,\ldots,x_n]^G$. 
	\end{proposition}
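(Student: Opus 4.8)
The plan is to show two things: that each $p^\tau$ is itself a homogeneous $G$-invariant polynomial of degree $k$, and that the collection $\{p^\tau : \tau \text{ a } k\text{-class}\}$ is both spanning and linearly independent in $\Real_k[x_1,\ldots,x_n]^G$. For the first point, I would argue that the coefficient tensor of $p^\tau$ is exactly the (symmetrized) indicator tensor $\tW^\tau$ with $\tW^\tau_{i_1\ldots i_k} = 1$ if $(i_1,\ldots,i_k)\in\tau$ and $0$ otherwise; since $\tau$ is by definition a union of $S_k$-orbits (the relation $\sim$ already quotients by $\sigma\in S_k$), this tensor is symmetric, and since $\tau$ is also closed under the $G$-action on indices, $\tW^\tau$ satisfies the fixed-point equation $g\cdot\tW^\tau = \tW^\tau$. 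By Proposition \ref{prop:homogenous_fixed_point} (or rather its converse direction, which follows by reading the same computation backwards), $p^\tau\in\Real_k[x_1,\ldots,x_n]^G$.

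For spanning: given any $p\in\Real_k[x_1,\ldots,x_n]^G$, write it as in \eqref{e:poly} with symmetric coefficient tensor $\tW$, which by Proposition \ref{prop:homogenous_fixed_point} satisfies $g\cdot\tW=\tW$ for all $g\in G$. The key observation is that $\tW$ is then constant on each $k$-class $\tau$: if $(i_1,\ldots,i_k)\sim(j_1,\ldots,j_k)$ via $g$ and $\sigma$, then $\tW_{j_1\ldots j_k} = \tW_{g(i_{\sigma(1)})\ldots g(i_{\sigma(k)})} = \tW_{i_{\sigma(1)}\ldots i_{\sigma(k)}} = \tW_{i_1\ldots i_k}$, using $G$-invariance of $\tW$ and then its $S_k$-symmetry. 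Hence, denoting by $c_\tau$ the common value of $\tW$ on $\tau$, we get $p = \sum_\tau c_\tau p^\tau$. For linear independence: the $k$-classes partition $\{1,\ldots,n\}^k$ into disjoint sets, so the monomials appearing in distinct $p^\tau$'s are disjoint; more precisely, each $p^\tau$ is a sum of distinct monomials $x_{i_1}\cdots x_{i_k}$ (one per $S_k$-orbit inside $\tau$, after collecting), and monomials in $p^\tau$ and $p^{\tau'}$ for $\tau\neq\tau'$ are genuinely different monomials in $\Real[x_1,\ldots,x_n]$. Since distinct monomials are linearly independent, $\sum_\tau c_\tau p^\tau = 0$ forces all $c_\tau=0$.

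I do not expect a serious obstacle here; the proposition is essentially bookkeeping built on Proposition \ref{prop:homogenous_fixed_point}. The one place that needs a little care is making sure the two quotients — by $G$ (via $g$) and by $S_k$ (via $\sigma$) — interact correctly, i.e. that the relation $\sim$ is genuinely an equivalence relation and that a $k$-class is simultaneously $S_k$-closed (so the indicator tensor is symmetric, keeping us inside $\mathrm{Sym}^k(\R^n)$ where the correspondence between polynomials and tensors is a bijection) and $G$-closed (so the indicator tensor is a fixed point). Once that is nailed down, the constancy-on-classes argument for spanning and the disjoint-monomials argument for independence both go through directly. A minor subtlety to mention: when translating $p^\tau$ from "sum over tuples in $\tau$" to an actual polynomial, several tuples in $\tau$ give the same monomial, so $p^\tau$ equals a positive-integer combination of distinct monomials rather than a sum of distinct ones — this does not affect independence since the monomial supports of different $p^\tau$ remain disjoint.
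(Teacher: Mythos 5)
Your proposal is correct and follows essentially the same route as the paper: $G$-invariance of each $p^\tau$, spanning via the fact that a fixed-point symmetric coefficient tensor is constant on $k$-classes, and independence via disjoint monomial supports. The only cosmetic difference is that you derive the $G$-invariance of $p^\tau$ from the fixed-point equation for $\tW^\tau$ (the converse direction of Proposition \ref{prop:homogenous_fixed_point}), whereas the paper checks $p^\tau(g\cdot x)=p^\tau(x)$ by direct substitution; both are fine, and your extra care about the $S_k$-closure of $\tau$ and the multiplicity of monomials is accurate.
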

	\begin{proof} Denote $\tW^\tau$ the symmetric coefficient tensor of $p^\tau$, Note that 
		\begin{equation}\label{e:thm_basis_W_tau}
		\tW^\tau_{i_1 \ldots i_k} = \begin{cases}
		1 & (i_1,\ldots,i_k)\in \tau\\
		0 & \text{otherwise}
		\end{cases}.
		\end{equation}
		Since $$p^\tau(g\cdot x) = \sum_{(i_1,\ldots,i_k)\in \tau} x_{g^{-1}(i_1)}\cdots x_{g^{-1}(i_k)}=p^\tau(x),$$ $p^\tau\in\Real_k[x_1,\ldots,x_n]^G$. The set of polynomials $p^\tau$, with $\tau$ a $k$-classes, is a linearly independent set since each $p^\tau$ contains a different collection of monomials. 
		By Proposition \ref{prop:homogenous_fixed_point}, the symmetric coefficient tensor $\tW$ of every $q\in \Real_k[x_1,\ldots,x_n]^G$ satisfies the fixed-point equation, \eqref{e:fixed_point}. This in particular means that $\tW$ is constant on its $k$-classes. Hence $\tW$ can be written as linear combination of $\tW^\tau$, see also  \eqref{e:thm_basis_W_tau}. 
	\end{proof}
	
	
	As we later show, the fixed point equation, \eqref{e:fixed_point}, is also used to characterize and compute a basis for the space of \emph{linear} permutation-equivariant and invariant layers \cite{maron2018invariant}.  These equations are equivalently formulated using weight sharing scheme in \cite{Ravanbakhsh2017}. A slight difference in this case, that deals with polynomials, is the additional constraints that formulate the symmetry of $\tW$  which are needed since every polynomial of degree $>1$ has several representing tensors $\tW$.  
	
	\paragraph{Part (iii):} Our next step is approximating each $p^\tau$ with a $G$-invariant network. The next proposition introduces the building blocks of this construction:
	\begin{proposition} \label{e:linear_equi}
		Let $\tau$ be a $k$-class and let $L^{\tau}_\ell:\R^{n}\too \R^{n^{k}}$, $\ell=1,\ldots,k$, be a linear operator defined as follows:\\ For $x\in \Real^n$
		\[ L^\tau_\ell(x)_{i_1\dots i_k}=
		\begin{cases} 
		x_{i_\ell} & (i_1,\dots,i_k)\in\tau \\
		0 & \text{otherwise} 
		\end{cases}.\]
		Then $L^\tau_\ell$ is a linear $G$-equivariant function, that is $$L^\tau_\ell(g\cdot x) = g\cdot L^\tau_\ell(x), \ \forall x\in \Real^n, g\in G.$$
	\end{proposition}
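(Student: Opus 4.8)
The plan is to verify the equivariance identity $L^\tau_\ell(g\cdot x) = g\cdot L^\tau_\ell(x)$ by comparing both sides entrywise at an arbitrary multi-index $(i_1,\dots,i_k)$ and using the definition of the $G$-action on tensors from \eqref{e:tensor_equivariance} together with the definition of $L^\tau_\ell$. First I would write out the left-hand side: $L^\tau_\ell(g\cdot x)_{i_1\dots i_k}$ equals $(g\cdot x)_{i_\ell} = x_{g^{-1}(i_\ell)}$ when $(i_1,\dots,i_k)\in\tau$, and $0$ otherwise. Then I would write out the right-hand side: by \eqref{e:tensor_equivariance}, $(g\cdot L^\tau_\ell(x))_{i_1\dots i_k} = L^\tau_\ell(x)_{g^{-1}(i_1)\dots g^{-1}(i_k)}$, which equals $x_{g^{-1}(i_\ell)}$ when $(g^{-1}(i_1),\dots,g^{-1}(i_k))\in\tau$ and $0$ otherwise.

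The two expressions agree on the nonzero values; the only thing to check is that the supports match, i.e. that $(i_1,\dots,i_k)\in\tau \iff (g^{-1}(i_1),\dots,g^{-1}(i_k))\in\tau$. This is exactly the statement that $\tau$, being a $k$-class, is closed under the coordinatewise action of $G$ — which is immediate from the definition of the equivalence relation generating the $k$-classes (take $\sigma=\mathrm{id}$ in the relation $(i_1,\dots,i_k)\sim(j_1,\dots,j_k)$ iff $j_\ell = g(i_{\sigma(\ell)})$). So if $(i_1,\dots,i_k)\in\tau$ then applying $g^{-1}$ coordinatewise lands in the same class $\tau$, and conversely. I would state this closure property as a one-line observation and then conclude that both sides of the claimed identity coincide at every entry.

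I do not expect a serious obstacle here; the result is essentially a bookkeeping check. The only point requiring a moment's care is getting the inverses straight — the $G$-action on tensors is defined with $g^{-1}$ on the indices, so one must confirm the index $i_\ell$ picked out by $L^\tau_\ell$ on the left transforms consistently with the reindexing on the right, and that the membership condition is invariant under $g^{-1}$ rather than only under $g$ (these are equivalent since $G$ is a group, so $g^{-1}\in G$). Once the support-invariance of $\tau$ is noted, the verification is a direct substitution, and I would present it compactly as a short case analysis (inside/outside $\tau$) rather than a long computation.
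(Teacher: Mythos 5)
Your proposal is correct and follows essentially the same route as the paper's proof: an entrywise comparison of $L^\tau_\ell(g\cdot x)$ and $g\cdot L^\tau_\ell(x)$, with both sides reducing to $x_{g^{-1}(i_\ell)}$ on matching supports because $(i_1,\dots,i_k)\in\tau$ if and only if $(g^{-1}(i_1),\dots,g^{-1}(i_k))\in\tau$. Your explicit remark that this closure follows from taking $\sigma=\mathrm{id}$ in the equivalence relation is a slightly more detailed justification than the paper's ``by definition of $\tau$,'' but the argument is otherwise identical.
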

	\begin{proof}
		We have :
		\[g \cdot L^\tau_\ell(x)_{i_1\dots i_k}=
		\begin{cases} 
		x_{g^{-1}(i_\ell)} & (g^{-1}(i_1),\dots,g^{-1}(i_k))\in\tau \\
		0 & \text{otherwise} 
		\end{cases}
		\]
		On the other hand, 
		\[ L^\tau_\ell(g \cdot x)_{i_1\dots i_k}=
		\begin{cases} 
		x_{g^{-1}(i_\ell)} & (i_1,\dots,i_k)\in\tau \\
		0 & \text{otherwise} 
		\end{cases}
		\]
		and both expressions are equal since $(i_1,\dots,i_k)\in\tau$ if and only if  $(g^{-1}(i_1),\dots,g^{-1}(i_k))\in\tau $ by definition of $\tau$.
	\end{proof}
	
	Next, we construct the approximating $G$-invariant network:
	\begin{proposition}\label{prop:approximatin_p_tau}
		For any $\epsilon>0$, $K\subset\Real^n$ compact set, and $\tau$ $k$-class there exists a $G$-invariant network $F^\tau$ that approximates $p^\tau$ from \eqref{e:indicator_poly} to an $\epsilon$-accuracy. 
	\end{proposition}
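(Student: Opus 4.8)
The plan is to build $F^\tau$ by composing the equivariant layers $L^\tau_\ell$ of Proposition \ref{e:linear_equi} with entrywise nonlinearities in a way that reconstructs the monomial map $x \mapsto (x_{i_1}\cdots x_{i_k})_{(i_1,\dots,i_k)}$ on the support of $\tau$, then summing the entries with a $G$-invariant linear functional. The key observation is that $p^\tau(x) = \sum_{(i_1,\dots,i_k)\in\tau} x_{i_1}\cdots x_{i_k}$, and each factor $x_{i_\ell}$ appearing in this product is exactly the $(i_1\dots i_k)$-entry of $L^\tau_\ell(x)$. So if I stack the outputs $L^\tau_1(x),\dots,L^\tau_k(x)$ as a tensor in $\R^{n^k \times k}$ (feature depth $k$), I have, at every entry indexed by $(i_1,\dots,i_k)\in\tau$, the $k$-tuple $(x_{i_1},\dots,x_{i_k})$; elsewhere the entry is zero. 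Applying an entrywise map $\R^k \to \R$ that approximates the product $(t_1,\dots,t_k)\mapsto t_1\cdots t_k$ uniformly on a large enough box (which exists by the universal approximation theorem for MLPs, using the assumed activation $\sigma$) yields a tensor whose $(i_1,\dots,i_k)$-entry approximates $x_{i_1}\cdots x_{i_k}$ on $\tau$ and is approximately zero off $\tau$. Finally, the $G$-invariant linear layer $h$ that sums all entries of the tensor produces an approximation of $p^\tau(x)$, and composing with a trivial MLP $m=\mathrm{id}$ finishes.

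First I would fix the compact set $K$ and let $M = \max_{x\in K}\max_i |x_i|$, so that for $x\in K$ every relevant coordinate lies in $[-M,M]$; note $L^\tau_\ell(x)$ has entries in $[-M,M]$ as well. Second, invoke the MLP universal approximation theorem to get an MLP $\phi:\R^k\too\R$ with $\sup_{t\in[-M,M]^k}|\phi(t) - t_1\cdots t_k| < \delta$, where $\delta$ will be chosen as $\epsilon$ divided by $(|\tau| + $ number of off-$\tau$ entries$)$, i.e. $\delta = \epsilon/n^k$ suffices crudely. One subtlety: applying $\phi$ entrywise to the zero entries off $\tau$ gives $\phi(0,\dots,0)$, not necessarily $0$; but since $\phi$ approximates the product, $|\phi(0)| < \delta$, so these contribute at most $\delta$ each to the final sum — this is already accounted for in the choice of $\delta$. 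Third, assemble the network: the first equivariant layer maps $x\in\R^n$ to the concatenation $(L^\tau_1(x),\dots,L^\tau_k(x))\in\R^{n^k\times k}$ (this is $G$-equivariant because each $L^\tau_\ell$ is, and concatenation along the feature axis preserves equivariance); then $\sigma\circ(\text{affine})\circ\cdots$ realizes $\phi$ acting entrywise (entrywise application of an MLP across the $n^k$ spatial indices is itself equivariant); then $h(\tY) = \sum_{i_1,\dots,i_k} \tY_{i_1\dots i_k}$ is a $G$-invariant linear layer; and $m$ is the identity. Fourth, bound the error: $|F^\tau(x) - p^\tau(x)| \le \sum_{(i_1,\dots,i_k)} |\phi(\text{entry}) - (\text{product or }0)| \le n^k\delta \le \epsilon$ for $x\in K$.

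The only genuinely delicate point is checking that every piece is a legitimate component of a $G$-invariant network in the sense of Definition \ref{def:G_invariant_networks}: I must confirm that stacking the $L^\tau_\ell$ into a multi-feature equivariant layer is allowed (it is — it is still an affine $G$-equivariant map $\R^{n\times 1}\to\R^{n^k\times k}$), and that applying an MLP entrywise over the spatial indices $i_1\dots i_k$ is expressible as a composition $\sigma\circ L\circ\cdots\circ\sigma\circ L$ of equivariant layers acting "pointwise" (diagonal layers $\tX \mapsto W\tX$ acting only on the feature axis are equivariant, since permuting spatial indices commutes with a fixed linear map on features). I would state this pointwise-MLP lemma explicitly and verify equivariance directly from \eqref{e:tensor_equivariance}. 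Everything else is the routine $\epsilon/n^k$ bookkeeping sketched above, and the product-approximation step is a black-box application of the classical MLP universality theorem referenced in Definition \ref{def:G_invariant_networks}.
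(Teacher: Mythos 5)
Your construction is essentially identical to the paper's proof: the same stacked equivariant layers $L^\tau_\ell$ assembled into a single map $\Real^n\too\Real^{n^k\times k}$, the same entrywise MLP approximating the $k$-fold product along the feature axis, and the same summation layer, with the same $\epsilon/n^k$ error accounting. Your explicit treatment of the off-$\tau$ entries (where the entrywise MLP outputs $\phi(0,\dots,0)\approx 0$ rather than exactly $0$) is in fact slightly more careful than the paper's, which writes $0$ in that case but whose final bound is unaffected since the total number of entries is $n^k$.
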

	\begin{proof}
		Let $c>0$ be sufficiently large so that $K\subset [-c,c]^n \subset \Real^n$. Denote $m^k:\Real^{k}\too \Real$ an MLP that approximates the multiplication function, $f(y_1,\ldots,y_k)=\prod_{i=1}^k y_i$, in $[-c,c]^k$ to $n^{-k}\epsilon$-accuracy, \ie, $\max_{-c\leq y_i \leq c}\abs{f(y)-m^k(y)}<n^{-k}\epsilon$. 	
		
		Consider the following $G$-invariant network: First, given an input $x\in \Real^n$ map it to $\Real^{n^k\times k}$ (\ie, $k$ is the number of channnels) by 
		\begin{equation}\label{e:L_tau}
		L^\tau(x)_{i_1\ldots i_k, \ell}=L^\tau_\ell(x)_{i_1\ldots i_k}.
		\end{equation}
		$L^\tau:\Real^n \too \Real^{n^k\times k}$ is a linear equivariant layer (see \eqref{e:tensor_equivariance}). Second, apply $m^k$ to the feature dimension in $\Real^{n^k\times k}$. That is, given $y\in\Real^{n^k\times k}$ define $$M^k(y)_{i_1,\ldots,i_k}=m^k(y_{i_1\ldots, i_k,1},\ldots,y_{i_1\ldots, i_k,k}).$$ Note that $M^k:\Real^{n^k\times k}\too \Real^{n^k}$ can be interpreted as a composition of equivariant linear layers \footnote{In fact, any application of an MLP to the feature dimension is $G$-equivariant for any $G\leq S_n$ since it can be realized by  scaling of the identity operator, possibly with a constant and non-linear point-wise activations (see \eg \cite{qi2017pointnet,zaheer2017deep}).}.
		
		Lastly, denote $s:\Real^{n^k}\too \Real$ the summation layer: for $z\in\Real^{n^k}$, $s(z)=\sum_{i_1\ldots i_k = 1}^n z_{i_1\ldots i_k}$. Note that $M^k,s$ are equivariant, invariant (respectively) for all $G\leq S_n$. This construction can be visualized using the following diagram:
		$$\Real^n\xrightarrow{L^{\tau}} \Real^{n^k\times k}\xrightarrow{M^k}\Real^{n^k}\xrightarrow{s} \Real$$

		This $G$-invariant network $F^\tau=s\circ M^k \circ L^\tau$ 
		approximates $p^\tau$ to an $\epsilon$-accuracy over the compact set $K\subset \Real^n$. Indeed, let $x\in K$, then
		\begin{align*}
		& \abs{F^\tau(x)-p^\tau(x)}\\
		& \leq \sum_{i_1\ldots i_k=1}^n\abs{M^k(L^\tau(x))_{i_1\ldots i_k} - \tW^\tau_{i_1\ldots i_k}x_{i_1}\cdots x_{i_k}} \\ 
		& \leq \sum_{i_1\ldots i_k=1}^n \begin{cases} \abs{m^k(x_{i_1},\ldots,x_{i_k})-x_{i_1}\cdots x_{i_k}} & {\scriptstyle (i_1,\ldots,i_k)\in \tau } \\ 0 & {\text{otherwise}} \end{cases} \\
		& \leq \epsilon, \end{align*} 
		where in the last inequality we used the $n^{-k}\epsilon$-accuracy of $m^k$ to the product operator in $[-c,c]^k\subset \Real^k$. 	
		
	\end{proof}

	\paragraph{Part (iv):}
	In the final stage, we would like to approximate an arbitrary $p\in \Real[x_1,\ldots,x_n]^G$ with a $G$-invariant network to $\epsilon$-accuracy over a compact set $K\subset \Real^n$.  
	\begin{proof} \textit{(proposition \ref{thm:main})}
		Let us denote by $b_{k1},\dots,b_{kn_k}$ the polynomials $p^\tau$, with $\tau$ the $k$-classes. Let $F^{kj}$ denote the $G$-invariant network approximating $b_{kj}$, $k=0,1,\ldots d$, $j\in [n_k]$, to an $\epsilon$-accuracy over the set $K$, the existence of which is guaranteed by proposition \ref{prop:approximatin_p_tau}. We now utilize the decomposition of $p$ shown in  \eqref{e:p_decomposition} and get 
		\begin{align*}
		& \abs{p(x)-\sum_{k=0}^d\sum_{j=1}^{n_k}\alpha_{kj}F^{kj}(x)}  \\
		&\leq \sum_{k=0}^d\sum_{j=1}^{n_k}\abs{\alpha_{kj}} \abs{b_{kj}(x) - F^{kj}(x)} \\ &\leq \epsilon \norm{\alpha}_1,
		\end{align*}
		where $\norm{\alpha}_1=\sum_{k,j}|\alpha_{kj}|$ depends only upon $p$, where $\epsilon$ is arbitrary. 
		To finish the proof we need to show that $F=\sum_{k=0}^d\sum_{j=1}^{n_k}\alpha_{kj}F^{kj}$ can indeed be realized as a \emph{single, unified} $G$-invariant network. This is a simple yet technical construction and we defer the proof of this fact to the supplementary material:
		\begin{lemma}\label{lem:unifying_G}
			There exists a $G$-invariant network in the sense of definition \ref{def:G_invariant_networks} that realizes the sum of $G$-invariant networks $F=\sum_{k=0}^d\sum_{j=1}^{n_k}\alpha_{kj}F^{kj}$.
		\end{lemma}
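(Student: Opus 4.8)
The plan is to show that the family of $G$-invariant networks of Definition~\ref{def:G_invariant_networks} is closed under (finite) linear combinations, by explicitly assembling the individual networks $F^{kj}$ into one network that carries all of them in parallel across the feature dimension. The key observation is that each $F^{kj}$ has the form $F^{kj} = m_{kj}\circ h_{kj}\circ L^{kj}_{d_{kj}}\circ\sigma\circ\cdots\circ\sigma\circ L^{kj}_1$, and all the operations involved — equivariant linear layers, entrywise $\sigma$, the invariant layer, and the final MLP — act in a way that can be ``stacked'' along extra feature channels without any cross-talk. So I would first pad all the networks to a common depth $D=\max_{k,j} d_{kj}$ by inserting identity equivariant layers (the identity on $\Real^{n^k\times a}$ is equivariant), and pad all the hidden tensors to a common order $d=\max_{k,j} d_{kj}$ (or more simply, keep track of the different orders but place them in disjoint feature blocks) — noting that a tensor of order $k$ can be embedded equivariantly into a tensor of order $k'>k$, e.g.\ by $\tX_{i_1\ldots i_k}\mapsto \tX_{i_1\ldots i_k}$ broadcast along the extra indices, or by the construction in Proposition~\ref{e:linear_equi}.

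Concretely, I would define the unified network on the direct sum of the feature spaces: at layer $t$, the equivariant layer $L_t$ is the block-diagonal operator $\bigoplus_{k,j} L^{kj}_t$ acting on $\bigoplus_{k,j}\Real^{n^{d}\times a_{kj,t}}$; block-diagonal maps are equivariant because each block is, and the $G$-action permutes the $n$-indices uniformly across all blocks. The activation $\sigma$ is applied entrywise, which is exactly what it does in each sub-network. The invariant layer is the concatenation (direct sum) $h=\bigoplus_{k,j} h_{kj}$, landing in $\bigoplus_{k,j}\Real^{b_{kj}}$, which is still a $G$-invariant linear layer. Finally, instead of running separate MLPs $m_{kj}$, I would use a single MLP $m$ on the concatenated vector that first applies each $m_{kj}$ to its own block in parallel (an MLP can simulate several parallel MLPs by making its weight matrices block-diagonal) and then takes the linear combination $\sum_{k,j}\alpha_{kj}(\cdot)$ as its last (linear) layer; since $m$ is allowed to be an arbitrary MLP and the $\alpha_{kj}$ are fixed scalars, this is legitimate. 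This yields $F = m\circ h\circ L_D\circ\sigma\circ\cdots\circ\sigma\circ L_1 = \sum_{k,j}\alpha_{kj}F^{kj}$ in the sense of Definition~\ref{def:G_invariant_networks}.

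The only genuinely fiddly points — and the ones I would treat carefully — are (a) reconciling the different tensor orders $k$ appearing among the $F^{kj}$: either allow the single network to have different blocks live at different orders (which is fine, since nothing in Definition~\ref{def:G_invariant_networks} forbids a layer from mapping between a direct sum of spaces of mixed order, and the ``order $d$'' of the resulting network is just the maximum), or uniformly lift everything to order $d$ via an equivariant embedding and check that $L^\tau, M^k, s$ still compose correctly after the lift; and (b) verifying that padding to common depth $D$ with identity layers does not break anything — this is immediate since $\mathrm{Id}$ is $G$-equivariant and commutes with $\sigma$ only up to the usual caveat, so the padding identity layers should be inserted at the \emph{end} of the equivariant stack (right before $h$) where no further $\sigma$ is applied, or one simply observes $\sigma\circ\mathrm{Id} = \sigma$ is harmless as an extra layer.

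The main obstacle is therefore bookkeeping rather than mathematics: one must be precise that ``block-diagonal / direct-sum'' versions of equivariant layers, invariant layers, and MLPs are again of the required type, and that the final linear combination with coefficients $\alpha_{kj}$ can be folded into the terminal MLP $m$. None of these steps requires a new idea beyond the equivariance of direct sums of equivariant maps and the fact that the class of MLPs is closed under parallel composition and post-composition with a fixed linear map; accordingly I would state and prove the three closure facts as short sub-lemmas and then assemble $F$ from them.
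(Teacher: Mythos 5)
Your proposal matches the paper's proof in all essentials: lift every sub-network to a common tensor order by an equivariant broadcast embedding, concatenate the lifted networks block-diagonally along the feature dimension, and absorb the coefficients $\alpha_{kj}$ into the final MLP. The only detail you defer that the paper makes explicit is how the lifted layers compose through the activations: it sandwiches each layer as $U^b L D^a$, where $U$ broadcasts along the extra indices and $D$ averages them out, so that $D^a\circ\sigma\circ U^a=\sigma$ and the lifted layers chain together correctly --- this is precisely the check you flag in your point (a).
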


	\end{proof}

	\subsection{Bounded order construction}
	We have constructed a $G$-invariant network $F$ that approximates an arbitrary $G$-invariant polynomial $p\in\Real[x_1,\ldots,x_n]^G$ of degree $d$. The network $F$ uses $d$-dimensional tensors, where $d$ matches the degree of $p$. In this subsection we construct a $G$-invariant network $F$ that approximates $p$ with maximal tensor order that depends only on the group $G\leq S_n$. Therefore, the tensor order is independent of the degree of the polynomial $p$. We use the following theorem by Noether \cite{kraft2000classical}:
	\begin{theorem} \textbf{(Noether)}\label{thm:noether}
		Let $G$ be a finite group acting linearly on $\R^n$.	There exist finitely many $G$-invariant polynomials $f_1, . . . , f_m\in\Real[x_1,\ldots,x_n]^G$ such that any invariant polynomial  $p\in \Real[x_1,\ldots,x_n]^G$ can be expressed as 	$$ p(x) = h(f_1(x), . . . , f_{m}(x)),$$
		where $h\in\Real[x_1,\ldots,x_m]$ is a polynomial and $\deg(f_i)\leq |G|$, $i=1,\ldots,m$.
	\end{theorem}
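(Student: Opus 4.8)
The plan is to prove the classical Noether bound by combining the Reynolds operator with a polarization identity and Newton's identities; this argument works whenever $|G|!$ is invertible, which holds over $\Real$. Write $m=|G|$ and let $\rho:\Real[x_1,\ldots,x_n]\too\Real[x_1,\ldots,x_n]^G$ be the Reynolds operator $\rho(f)=\frac{1}{m}\sum_{g\in G}g\cdot f$. Since $\rho$ is surjective onto the invariant ring and $\Real[x_1,\ldots,x_n]^G$-linear, every invariant $p$ equals $\rho(p)=\sum_\alpha c_\alpha\,\rho(x^\alpha)$, so it suffices to show that each $\rho(x^\alpha)$ lies in the subalgebra $A\subseteq\Real[x_1,\ldots,x_n]^G$ generated by the (finitely many) invariants $\rho(x^\beta)$ with $|\beta|\le m$. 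Relabeling these generators as $f_1,\ldots,f_m$ and expanding $p$ in them yields the claimed $h$.

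First I would reduce from arbitrary monomials to $d$-th powers of linear forms. Given a monomial $x^\alpha$ of degree $d=|\alpha|$, write it as a product $z_1\cdots z_d$ with each $z_i\in\set{x_1,\ldots,x_n}$ and use the polarization identity $d!\,z_1\cdots z_d=\sum_{\emptyset\ne S\subseteq[d]}(-1)^{d-|S|}\big(\sum_{i\in S}z_i\big)^{d}$. Applying the linear map $\rho$, it is then enough to show $\rho(\ell^{d})\in A$ for every linear form $\ell$ and every $d\ge 0$.

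Next I would fix $\ell$ and introduce the $m$ linear forms $\ell_g:=g\cdot\ell$, $g\in G$ (listed with multiplicity). Their power sums satisfy $p_k:=\sum_{g\in G}\ell_g^{\,k}=\sum_{g\in G}g\cdot(\ell^{k})=m\,\rho(\ell^{k})$, so each $p_k$ is exactly $m$ times a $G$-invariant of degree $k$. The elementary symmetric functions $e_1,\ldots,e_m$ of $\ell_{g_1},\ldots,\ell_{g_m}$ have $e_k=0$ for $k>m$, and Newton's identities (solvable here because $1,2,\ldots$ are invertible) then express every $p_d$ with $d\ge1$ as a polynomial with rational coefficients in $p_1,\ldots,p_m$. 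Dividing by $m$, $\rho(\ell^{d})$ becomes a polynomial in $\rho(\ell),\rho(\ell^{2}),\ldots,\rho(\ell^{m})$, all of degree $\le m$, so $\rho(\ell^{d})\in A$. Together with the polarization step this gives $\rho(x^\alpha)\in A$ for all $\alpha$, and finiteness of the generating set is immediate since there are finitely many $\beta$ with $|\beta|\le m$.

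I expect the polarization/Newton bookkeeping to be the only genuine subtlety: one must track homogeneous pieces carefully so that only degrees $\le|G|$ ever enter the generating set (independently of $d$), check that the rational coefficients arising are harmless over $\Real$, and confirm that the $\rho$-reduction really produces a polynomial $h$ in the chosen generators rather than merely a power series or an algebraic relation. An alternative to Newton's identities is to use the generating function $\prod_{g\in G}\big(1+\sum_i (g\cdot x)_i\, t_i\big)$, whose coefficients in the auxiliary variables $t$ are $G$-invariants of degree exactly $|G|$; but showing these generate still routes through essentially the same polarization argument, so I would keep the Newton-identity version.
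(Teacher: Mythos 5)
Your argument is correct, and it is essentially the classical proof of Noether's degree bound; the paper itself offers no proof of this theorem, citing it as a known result from Kraft--Procesi, so there is nothing in the text to compare against step by step. Your route --- Reynolds operator to reduce to the images $\rho(x^\alpha)$, polarization to reduce further to powers $\rho(\ell^d)$ of linear forms, and Newton's identities applied to the $|G|$ translates $g\cdot\ell$ to express $p_d$ through $p_1,\ldots,p_{|G|}$ --- is sound over $\Real$, where all the required divisions (by $d!$, by $k\leq|G|$, and by $|G|$ itself) are harmless; the polarization identity holds as a formal identity in $z_1,\ldots,z_d$ and so survives repeated variables, and each $\rho(\ell^k)$ with $k\leq|G|$ is indeed a linear combination of the chosen generators $\rho(x^\beta)$, $|\beta|\leq|G|$, so the degree bookkeeping closes. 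The only blemishes are notational: you set $m=|G|$ and then also write the generators as $f_1,\ldots,f_m$, whereas the number of generators is the number of monomials of degree at most $|G|$ in $n$ variables, not $|G|$; and the surjectivity/module-linearity of $\rho$ that you invoke is more than you need --- the identity $\rho(p)=p$ for invariant $p$ suffices. Neither affects the validity of the proof.
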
   
	The idea of using a set of generating invariant polynomials in the context of universality was introduced in \cite{yarotsky2018universal}.
	
	For the case of interest in this paper, namely $G\leq S_n$,  there exists a generating set of $G$-invariant polynomials of degree bounded by $\frac{n(n-1)}{2}$, for $n\geq 3$, see \cite{gobel1995computing}. We can now repeat the construction above, building a $G$-invariant network $F_i$ approximating $f_i$ to a $\epsilon_1$-accuracy, $i=1,\ldots,m$. The maximal order of these networks is bounded by $d \leq \frac{n(n-1)}{2}$. These networks can be combined, as above, to a single $G$-invariant network $F:\Real^n\too\Real^m$ with the final output approximating $f(x)=(f_1(x),\ldots,f_m(x))$ to a $\epsilon_1$-accuracy. Now we compose the output of $F$ with an MLP $H:\Real^m\too \Real$ approximating the polynomial $h$ over the compact set $f(K)+B_\epsilon\subset \Real^m$  to an $\epsilon$-accuracy, where $B_\epsilon$ is a closed ball centered at the origin of radius $\epsilon$ and the sum is the Minkowski sum. Since $H$ is continuous and $f(K)+B_\epsilon$ is compact, there exists $\delta>0$ so that $\abs{H(y)-H(y')}\leq \epsilon$ if $\norm{y-y'}_2\leq \delta$. We use $\epsilon_1=\min \set{\delta,\eps}$ for the construction of $F$ above. We have:
	\begin{align*}
	\abs{H(F(x))-h(f(x))}  & \leq \abs{H(F(x))-H(f(x))} \\ &+ \abs{H(f(x))-h(f(x))} \leq 2\epsilon,
	\end{align*}
	for all $x\in K$. We have constructed $H\circ F$ that is a $G$-invariant network with maximal tensor order bounded by $\frac{n(n-1)}{2}$ approximating $p$ to an arbitrary precision.

	\subsection{Examples} 
	
	\paragraph{Universality of (hyper-) graph networks.}
	Graph, or hyper-graph data can be described using tensors $\tX\in\Real^{n^k\times a}$, where $n$ is the number of vertices of the graph and $x_{i_1,i_2,\ldots,i_k,:}\in\Real^a$ is a feature vector attached to a (generalized-)edge defined by the ordered set of vertices $(i_1,i_2,\ldots,i_k)$. For example, an adjacency matrix of an $n$-vertex graph is described by $\mX\in\Real^{n^2}$. The graph symmetries are reordering the vertices by a permutation, namely $g\cdot \tX$, where $g\in S_n$. Typically, any function we would like to learn on graphs would be invariant to this action.  Recently, \cite{maron2018invariant} characterized the spaces of equivariant and invariant linear layers with this symmetry, provided a formula for their basis and employed the corresponding $G$-invariant networks for learning graph-related tasks.  A corollary of Theorem \ref{thm:main} is that this construction yields a universal approximator of continuous functions defined on graphs. This is in contrast to the popular \emph{message passing neural network} model \cite{Gilmer2017} that was recently shown to be non-universal \cite{xu2018how}.
	
	\paragraph{Universality of rotation invariant convolutional networks.} For learning tasks involving $m\times m$ images one might require invariance to periodic translations and $90$ degree rotations. Note that periodic translations and $90$ degree rotations can be seen as permutations in $S_n$, $n=m^2$, acting on the pixels of the image. Constructing a suitable $G$-invariant network would lead, according to Theorem \ref{thm:main}, to a universal approximator. 
	
	\section{A lower bound on equivariant layer order}\label{s:lower_bound}
	In the previous section we showed how an arbitrary $G$-invariant polynomial can be approximated with a $G$-invariant network with tensor order $d=d(G)\leq \frac{n(n-1)}{2}$. This upper-bound would be prohibitive in practice. In this section we prove a \emph{lower bound}: We show that there exists a group for which the tensor order cannot be less than $\frac{n-2}{2}$ if we wish to maintain the universal approximation property.

	\begin{figure}[t!]
		\centering
		
		\includegraphics[width=0.4\textwidth]{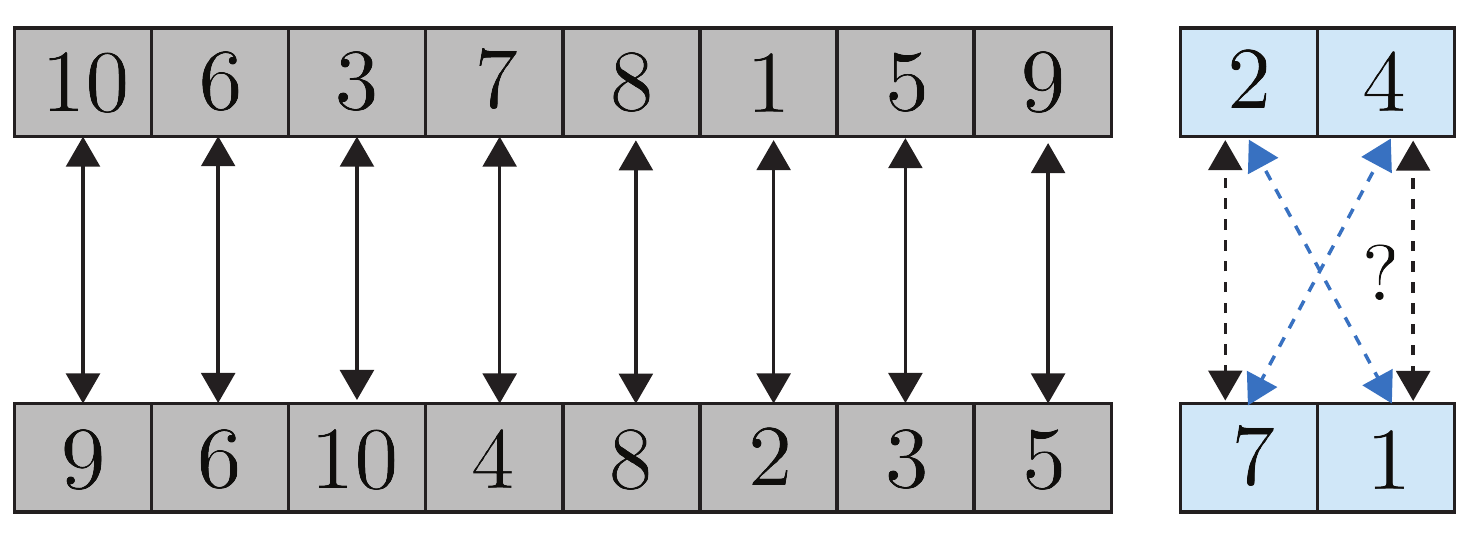} 
		\caption{ Illustration of the $(n-2)$-transitivity of $A_n$, the main property we use in this section. Any subset of distinct $n-2$ elements can be mapped to any other subset of distinct $n-2$ elements (gray). If needed, a transposition can be applied to the remaining $2$ elements (blue) to assure an even permutation. }\label{f:transitive}
	\end{figure}

	We consider the alternating group, $G=A_n\leq S_n$. Remember that $g\in A_n$ if $g$ has an even number of transpositions. 
	\begin{definition}
		A group $G\leq S_n$ is $k$-transitive if for every two sequences $(i_1,i_2,\ldots,i_k)$, $(j_1,j_2,\ldots,j_k)$ of distinct elements in $[n]$ there exists $g\in G$ so that $j_\ell=g(i_\ell)$, for $\ell=1,\ldots,k$.	
	\end{definition}
	The alternating group is $(n-2)$-transitive (see figure \ref{f:transitive} and \cite{dixon1996permutation}). Our goal is to prove:
	\begin{reptheorem}{thm:lower_bound}
		If an $A_n$-invariant network has the universal approximation property, then it consists of tensors of order at least $\frac{n-2}{2}$.
	\end{reptheorem}

	For the proof we first need a characterization of the linear equivariant layers $L:\Real^{n^k\times a} \too \Real^{n^l\times b}$, where $l=0$ represents the invariant case. By definition $L(g\cdot \tX) = g\cdot L(\tX)$ for all $\tX\in\Real^{n^k\times a}$.  In particular this means that $$g^{-1}\cdot L ( g\cdot \tX) = L(\tX)$$
	Recall that $L$ is an \emph{affine map} (see definition \ref{def:equi_inv_layer}) and therefore can be represented as a sum of a purely linear part and a constant part. 
	Representing the linear part of $L$ as a tensor $\tL\in\Real^{n^{k+l}\times a \times b}$  these equations become the fixed-point equation for linear equivariant layers  (see supplementary material for derivation): 
	\begin{equation}\label{e:equi_fixed_point_linear}
	g\cdot \tL = \tL, \ g\in G.	
	\end{equation}
	The constant part of $L$ can be encoded using a tensor $\tB\in\Real^{n^l\times b}$ that satisfies \eqref{e:equi_fixed_point_linear} as-well.  
	%
	Note the that this fixed point equation is similar to the fixed point equation of homogeneous $G$-invariant polynomials, \eqref{e:fixed_point}. We denote by $\mathcal{L}^G$  the collection of $L:\Real^{n^k\times a}\too \Real^{n^l\times b}$ linear $G$-equivariant ($l>0$) or $G$-invariant ($l=0$) layers.  
	
	\begin{proposition}\label{prop:Sn_An_equi}
		If $k+l\leq n-2$, then $\mathcal{L}^{A_n}=\mathcal{L}^{S_n}$. 
	\end{proposition}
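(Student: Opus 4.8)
The plan is to reduce the claim to a statement about orbits of index tuples under the two groups. By the fixed-point description recalled just above, a layer $L\in\mathcal{L}^G$ is determined by its linear part $\tL\in\Real^{n^{k+l}\times a\times b}$ and its constant part $\tB\in\Real^{n^l\times b}$, and both must satisfy $g\cdot\tL=\tL$ and $g\cdot\tB=\tB$ for all $g\in G$. Since $G$ acts only by permuting the ``spatial'' index slots and leaves the feature slots $a,b$ untouched, the fixed-point equation says precisely that, for each fixed choice of feature indices, the entries of $\tL$ are constant on the $G$-orbits of $[n]^{k+l}$, and the entries of $\tB$ are constant on the $G$-orbits of $[n]^{l}$. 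Hence the space of $G$-fixed tensors — and therefore $\mathcal{L}^G$ — depends only on the orbit partitions of $[n]^{k+l}$ and of $[n]^{l}$ induced by $G$. As $A_n\le S_n$, every $S_n$-fixed tensor is $A_n$-fixed, so $\mathcal{L}^{S_n}\subseteq\mathcal{L}^{A_n}$ unconditionally; it therefore suffices to show that, when $k+l\le n-2$, the groups $A_n$ and $S_n$ induce the \emph{same} orbit partition on $[n]^{k+l}$ (and, a fortiori, on $[n]^{l}$).

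So fix $m\le n-2$ and two tuples $I=(i_1,\dots,i_m)$, $J=(j_1,\dots,j_m)$ lying in the same $S_n$-orbit, i.e.\ $\sigma(i_\ell)=j_\ell$ for $\ell=1,\dots,m$ and some $\sigma\in S_n$. If $\sigma$ is even, $I$ and $J$ are already in the same $A_n$-orbit. If $\sigma$ is odd, observe that at most $m\le n-2$ distinct values occur among $j_1,\dots,j_m$, so there exist two distinct elements $a,b\in[n]$ with $a,b\notin\{j_1,\dots,j_m\}$. Set $\sigma'=(a\,b)\,\sigma\in A_n$. Then $\sigma'$ is even, and because the transposition $(a\,b)$ fixes every $j_\ell$ we still have $\sigma'(i_\ell)=j_\ell$ for all $\ell$. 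Hence $I$ and $J$ lie in the same $A_n$-orbit. This proves $[n]^m/A_n=[n]^m/S_n$ for every $m\le n-2$; applying it with $m=k+l$ and $m=l$ shows that the linear and constant parts of $A_n$-layers span exactly the same spaces as those of $S_n$-layers, so $\mathcal{L}^{A_n}=\mathcal{L}^{S_n}$.

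This is essentially the $(n-2)$-transitivity of $A_n$ used elsewhere in this section, extended from tuples with distinct entries to arbitrary tuples; the extension is free, since any bijection automatically preserves the equality pattern of coordinates, so ``same $S_n$-orbit'' is the only hypothesis on $I,J$ that is needed. The one point requiring care — and the only place the assumption $k+l\le n-2$ enters — is the parity correction: we must be able to pick the correcting transposition \emph{supported on two indices missed by the image tuple}, which is possible exactly when at least two elements of $[n]$ are unused. I do not expect a genuine obstacle here; the remaining work is routine bookkeeping of the feature dimensions $a,b$ and treating $\tL$ and $\tB$ on the same footing, both immediate once one notes $l\le k+l\le n-2$.
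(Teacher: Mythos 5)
Your proof is correct and follows essentially the same route as the paper: reduce to showing $A_n$ and $S_n$ induce the same orbit partition on index tuples, then fix parity with a transposition supported on two unused elements of $[n]$ (exactly the mechanism behind the $(n-2)$-transitivity the paper invokes). The only difference is cosmetic: you argue directly on arbitrary tuples, making explicit the extension from distinct-entry tuples to general ones that the paper handles implicitly via "equality patterns".
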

	\begin{proof} 
		In view of the fixed point equation for equivariant/invariant layers (\ref{e:equi_fixed_point_linear}) we need to show the solution set to this equation is identical for $G=A_n$ and $G=S_n$, as long as $k+l\leq n-2$. The solution set of the fixed point equation consists of tensors $\tL$ that are constant on each equivalence class defined by the equivalence relation: $(i_1,\ldots,i_{k+l})\sim (j_1,\ldots,j_{k+l})$ if $j_\ell = g(i_\ell)$ for $\ell=1,\ldots,k+l$. 
		
		Both $A_n$ and $S_n$ are $(n-2)$-transitive\footnote{$S_n$ is in-fact $n$-transitive and is therefore also $k$-transitive for all $k\leq n$.}. Therefore, the equivalence relations defined above for $A_n$ and $S_n$ reduce to the \emph{same} equivalence relation $(i_1,\ldots,i_{k+l})\sim (j_1,\ldots,j_{k+l})$  if $i_\alpha=i_\beta$ if and only if $j_\alpha=j_\beta$, for all $\alpha,\beta \in [k+l]$ (see \cite{maron2018invariant} where these classes are called \emph{equality patterns}). Since this equivalence relation is the same for $A_n,S_n$, we get that the solution set of the fixed point equation (\ref{e:equi_fixed_point_linear}) is the same for both groups. Since the constant part tensor $\tB$ is of smaller order than $k+l\leq n-2$, the same argumentation applies to the constant part, as-well. 
	\end{proof}
	Proposition \ref{prop:Sn_An_equi} implies that any $A_n$-invariant network with tensor order $\leq(n-2)/2$ will be in fact $S_n$-invariant. Therefore, one approach to show that such networks have limited approximation power is to come up with an $A_n$-invariant continuous function that is \emph{not} $S_n$-invariant, as follows: 
	\begin{proof}\textit{(Theorem \ref{thm:lower_bound})}
		Consider the Vandermonde polynomial $V(x)=\prod_{1\leq i<j \leq n}(x_i-x_j)$. It is not hard to check that $V$ is $A_n$-invariant but not $S_n$-invariant (consider, \eg, $g=(12)\in S_n$). Pick $x\in\R^n$ with distinct coordinates. Then it holds that $V(x)\ne 0$. Let $\epsilon>0$ and $K\subset \Real^n$ a compact set containing both $x,g\cdot x$ for $g=(12)$. Assume by way of contradiction that there exists an $A_n$-invariant network $F$, which is $S_n$-invariant due to the above,  such that $|V(x)-F(x)|\leq \epsilon$ as well as:
		\begin{align*}
		|V(g\cdot x)-F(g\cdot x)|&=|(-1)V(x)-F(x)|\\ &=|V(x)+F(x)|\\&\leq\epsilon 
		\end{align*}
		These last equations imply that $|V(x)|\leq\epsilon$ and since $\epsilon$ is arbitrary we get $V(x)=0$, a contradiction.
	\end{proof}

	\section{Universality of first order networks}\label{s:first_order} 
	We have seen that $G$-invariant networks with tensor order $\frac{n(n-1)}{2}$ are universal. On the other hand for general permutation groups $G$ the tensor order is at least $(n-2)/2$ if universality is required. A particularly important question for applications, where higher order tensors are computationally prohibitive, is which permutation groups $G$ give rise to \emph{first order $G$-invariant networks} that are universal.  
	\begin{definition}\label{def:first_order_inv_networks}
		A first order $G$-invariant network is a $G$-invariant network where the maximal tensor order is $1$.	
	\end{definition}
	In this section we discuss this (mostly) open question.
	First, we note that there are a few cases for which first order $G$-invariant networks are known to be universal: for instance, when $G= \left\lbrace e\right\rbrace$ (\ie, the trivial group), $G$-invariant networks are composed of fully connected layers, a case which is covered by the original universal approximation theorems \cite{cybenko1989approximation,hornik1991approximation}.  First order universality is also known when $G$ is (possibly high dimensional) grid (\eg, $G=\Z_{n_1}\times \dots \times \Z_{n_k}$) \cite{yarotsky2018universal}, a case that includes periodic convolutional neural networks. Universality of first order networks is also known when $G=S_n$ \cite{zaheer2017deep,qi2017pointnet,yarotsky2018universal} in the context of invariant networks that operate on sets or point clouds.  
	
	Our goal in this section is to derive a necessary condition on $G$ for the universality of first order $G$-invariant networks. To this end, we first find a function, playing the role of the Vandermonde polynomial in the previous section, that is $G$-invariant but not $H$-invariant, where $G < H \leq S_n$. 
	\begin{lemma}\label{l:separating_function}
		Let $G<H\leq S_n$. Then there exists a continuous function $f:\R^n\too \R$ which is $G$-invariant but not $H$-invariant.
	\end{lemma}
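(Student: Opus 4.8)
The plan is to exploit the strict inclusion $G<H$ as concretely as possible: choose an element $h\in H\setminus G$ and a point whose $G$-orbit is genuinely moved by $h$, then hand-build a continuous $G$-invariant function that takes different values at that point and at its image under $h$.

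First I would fix $x_0\in\R^n$ with pairwise distinct coordinates, say $x_0=(1,2,\ldots,n)$. A one-line computation shows $\mathrm{Stab}_{S_n}(x_0)=\{e\}$: if $g\cdot x_0=x_0$ then $(x_0)_{g^{-1}(i)}=(x_0)_i$ for all $i$, forcing $g^{-1}(i)=i$. Using this I would show that the orbits $G\cdot x_0$ and $G\cdot(h\cdot x_0)$ are disjoint. Orbits of a group action are either equal or disjoint; and if $h\cdot x_0=g\cdot x_0$ for some $g\in G$, then $g^{-1}h$ stabilizes $x_0$, so $g^{-1}h=e$ and $h=g\in G$, contradicting $h\notin G$. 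Hence $S:=(G\cdot x_0)\cup(G\cdot(h\cdot x_0))$ is a finite set of distinct points, and for $r>0$ small enough the ball $B(x_0,r)$ meets $S$ only in $x_0$.

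Next I would take any continuous $\phi:\R^n\to[0,1]$ with $\phi(x_0)=1$ and $\mathrm{supp}\,\phi\subseteq B(x_0,r)$, and define
\[
f(x)=\sum_{g\in G}\phi(g^{-1}\cdot x).
\]
Re-indexing the sum gives $f(g_0\cdot x)=f(x)$ for every $g_0\in G$, so $f$ is continuous and $G$-invariant. Evaluating: at $x_0$ only the term $g=e$ survives (for $g\neq e$, $g^{-1}\cdot x_0$ is a point of $G\cdot x_0$ other than $x_0$, hence outside $B(x_0,r)$), so $f(x_0)=1$; while every term $\phi(g^{-1}h\cdot x_0)$ vanishes because $g^{-1}h\cdot x_0\in G\cdot(h\cdot x_0)$, which is disjoint from $\{x_0\}$ and therefore outside $B(x_0,r)$, so $f(h\cdot x_0)=0$. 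Since $h\in H$ and $f(h\cdot x_0)=0\neq1=f(x_0)$, the function $f$ is not $H$-invariant, finishing the argument.

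The one delicate point is the separation step — ensuring the bump at $x_0$ does not accidentally detect the other points of the two orbits — and this is precisely where the triviality of $\mathrm{Stab}_{S_n}(x_0)$ for $x_0$ with distinct entries, together with the equal-or-disjoint dichotomy for orbits, is used. I note that one can alternatively produce a \emph{polynomial} witness, e.g.\ $f(x)=\prod_{g\in G}\langle g\cdot a,x\rangle$ for a generic coefficient vector $a$ (generic enough that $\{g\cdot a:g\in G\}$ consists of $|G|$ distinct, pairwise non-proportional vectors): this product is manifestly $G$-invariant, and comparing $|G\cdot a|=|G|$ with $|H\cdot a|=|H|$ forces some $h\in H$ to carry $\{g\cdot a\}$ onto a different set $\{hg\cdot a\}$, so by unique factorization $h\cdot f\neq f$. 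Either witness serves the later use of the lemma, so I would present the self-contained bump-function construction as the main proof and mention the polynomial one as a remark.
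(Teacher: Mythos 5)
Your proof is correct and follows essentially the same route as the paper's: fix a point $x_0$ with distinct coordinates, use the triviality of its stabilizer in $S_n$ to separate the relevant orbits, build a continuous function distinguishing $G\cdot x_0$ from $h\cdot x_0$ for some $h\in H\setminus G$, and symmetrize over $G$. The only difference is cosmetic (a compactly supported bump summed over $G$ versus a function prescribed on the finite set $H\cdot x_0$ and then averaged), and your added polynomial witness is a nice optional remark.
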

	\begin{proof}
		Pick a point $x_0\in\R^n$ with distinct coordinates. Since the stabilizer $(S_n)_{x_0}$ is trivial (\ie, no permutation fixes $x_0$ excluding the identity), the size of the orbits of $x_0$ equals the size of the acting group. Namely, $|G\cdot x_0|=|G|$ and $|H \cdot x_0|=|H|$. Furthermore, since $|G|<|H|$ and $G \cdot x_0\subset H \cdot  x_0$, we get that the $H$ orbit strictly includes the $G$ orbit. That is, $G\cdot x\subsetneq H\cdot x$. Since $H\cdot x_0$ is a finite set of points, there exists a continuous function $\hat{f}$ such that  $\hat{f}|_{G\cdot x_0}=1$, and $\hat{f}|_{H\cdot x_0\setminus G\cdot x_0}=0$. Define $f(x)=\frac{1}{|G|}\sum_{g\in G} \hat{f} (g\cdot x)$. Now, $f$ is $G$-invariant by construction but $f(x_0)=1$ and $f(h\cdot x_0)=0$ for $h\cdot x_0 \in H\cdot x_0 \setminus G\cdot x_0$. Therefore, $f$ is not $H$-invariant.
	\end{proof}

	In case of first order $G$-invariant networks the equivariant/invariant layers have the form $L:\Real^{n\times a} \too \Real^{n\times b}$ and satisfy the fixed point equations (\ref{e:equi_fixed_point_linear}). The solution set of the purely linear equivariant layers consists of tensors $\tL\in\Real^{n^2\times a \times b}$ that are constant on equivalence classes of indices defined by the equivalence relation $(i_1,i_2)\sim (j_1,j_2)$ if there exists $g\in G$ so that $j_\ell=g(i_\ell)$, $\ell=1,2$. We denote the number of equivalence classes by $|[n]^2/G|$. The solution set of constant equivariant operators are tensors $\tB\in\Real^{n\times b}$ that are constant on equivalence classes defined by the equivalence relation $i\sim j$ if there exists $g\in G$ so that $j=g(i)$. We denote the number of these classes by $|[n]/G|$. We prove:
	
	\begin{reptheorem}{thm:necessary}	
		Let $G\leq S_n$. If first order $G$-invariant networks are universal, then $\left| [n]^2/H\right| < \left| [n]^2/G\right| $ for any strict super-group $G < H \leq S_n$. 
	\end{reptheorem}  
	\begin{proof}
		Assume by contradiction that there exists a strict super-group $G<H\leq S_n$ so that $\left| [n]^2/G\right| = \left| [n]^2/H\right|$. This in particular means that $|[n]/G|=|[n]/H|$. Therefore $\gL^G=\gL^H$. That is, the spaces of equivariant and invariant linear layers coincide for $G$ and $H$. This implies, as before, that every first order $G$-invariant network is also $H$-invariant.
		
		We proceed similarly to the proof of theorem \ref{thm:lower_bound}: By lemma \ref{l:separating_function}, there exists a continuous function $f:\Real^n\too\Real$ that is $G$-invariant but not $H$-invariant. Let $x_0$ be a point with distinct coordinates where $f(x_0)=1$ (it exists by construction, see proof of theorem \ref{thm:lower_bound}). Furthermore, by construction $f(h\cdot x_0) = 0$ if $h\cdot x_0 \in H\cdot x_0 \setminus G \cdot x_0$.

		Let $\epsilon>0$ and $K\subset \Real^n$ a compact set containing both $x_0, h\cdot x_0$. Assume by way of contradiction that there exists a first order $G$-invariant network $F$ (which is also $H$-invariant in view of the above) such that $|f(x_0)-F(x_0)|\leq \epsilon$ as well as:
		\begin{align*}
		|f(h\cdot x_0)-F(h\cdot x_0)|&=|f(h\cdot x_0)-F(x_0)| \leq\epsilon.
		\end{align*}
		These last equations imply that $1 = |f(x_0)-f(h\cdot x_0)|\leq |f(x_0)-F(x_0)| + |F(x_0)-f(h\cdot x_0)|\leq 2\epsilon$ and since $\epsilon$ is arbitrary we get a contradiction.
	\end{proof}

	Using theorem \ref{thm:necessary} we can show that there exist a few infinite families of permutation groups (excluding the alternating group $A_n$) for which first order invariant networks are not universal. For example, any strict subgroup $G < S_n$ that is $2$-transitive is such a group since in this case $\left| [n]^2/G\right| = \left| [n]^2/S_n\right| $ and consequently $G$-invariant/equivariant layers are also $S_n$-invariant/equivariant. Examples of 2-transitive permutation groups include projective linear groups over finite fields $PSL_d(F_q)$ (for $q=p^n$ where $p,n\in\sN$, $p$ is prime) that act on the finite projective space, and can be seen as a subgroup of $S_n$ for $n=(q^d-1)/(q-1)$ (the number of elements in this space ). Similarly affine subgroups over finite fields $A\Gamma L_d(F_q)$  that act on $F_q^d$ can be shown to be $2$-transitive as a subgroup of $S_n$ for $n=q^d$. See \cite{dixon1996permutation} for a full classification of $2$-transitive subgroups of $S_n$.

	\paragraph{Relation to \cite{Ravanbakhsh2017}.}
	Groups for which the condition in theorem \ref{thm:necessary} holds are called 2-closed and were first introduced by \cite{wielandt1969permutation} (see \cite{babai1995automorphism} for further study). Theorem \ref{thm:necessary} reveals an interesting connection between our work and the work of \cite{Ravanbakhsh2017} that studies parameter sharing schemes. One of the basic notions defined in their paper is the notion of \emph{uniquely $G$-equivariant functions}, which describes functions that are $G$-equivariant but not equivariant to any super-group of $G$.  For example, a consequence of proposition \ref{prop:Sn_An_equi} is that $A_n\leq S_n$ (with the representation used in this paper) has no uniquely equivariant linear functions  between tensors of total order $\leq n-2$. It was shown in \cite{Ravanbakhsh2017} that 2-closed groups are exactly the groups for which one can find a uniquely equivariant function. In this section we proved that the existence of a uniquely $G$-equivariant linear function is a necessary condition for first order universality. As stated in \cite{Ravanbakhsh2017} some examples for 2-closed groups are fixed-point free groups (\eg, the cyclic group $C_n$) and $S_n$ itself.

	\section{Conclusion}
	In this paper we have considered the universal approximation property of a popular invariant neural network model. We have shown that these networks are universal with a construction that uses tensors of order $\leq \frac{n(n-1)}{2}$, which makes this architecture impractical. On the other hand, there exists a permutation group for which we have proved a lower bound of $\frac{n-2}{2}$ on the tensor order required to achieve universality. We then addressed the more practical question of which groups $G$ allow first order $G$-invariant networks to be universal. We have proved that 2-closedness of $G$ is a necessary condition, and gave examples of infinite permutation group families that do not satisfy this condition. 
	
	Our work is a first step in advancing the understanding of approximation power of a large class of invariant neural networks that becomes increasingly popular in applications. Several questions remain open: First, a classification of 2-closed groups will give us a complete answer to which networks are first-order universal. As far as we know this is an open question in group theory. Still, mapping the 2-closed landscape for specific groups $G$ that are interesting for machine learning applications is a worthy challenge. Second, In case one wishes to construct a $G$-invariant network for a group $G$ that is not 2-closed, developing fast and efficient implementations of higher order layers seems like a potentially useful direction. Lastly, another interesting venue for future work might be to come up with new, possibly non-linear, models for invariant networks.     
	
	\subsection*{Acknowledgments}
	This research was supported in part by the European Research Council (ERC Consolidator Grant, "LiftMatch" 771136) and the Israel Science Foundation (Grant No. 1830/17).

	\bibliography{example_paper}
	\bibliographystyle{icml2019}
\clearpage

 \appendix
 \section{Proofs}
 
 \begin{replemma}{lem:unifying_G}
 	There exists a $G$-invariant network in the sense of definition \ref{def:G_invariant_networks} that realizes the sum of $G$-invariant networks $F=\sum_{k=0}^d\sum_{j=1}^{n_k}\alpha_{kj}F^{kj}$.
 \end{replemma}
 \begin{proof}
 	We need to show that $F=\sum_{k=0}^d\sum_{j=1}^{n_k}\alpha_{kj}F^{kj}$ can indeed be realized as a \emph{single, unified} $G$-invariant network. As we already saw, each network $F^{kj}$ has the structure $$\Real^n\xrightarrow{L^{\tau}} \Real^{n^k\times k}\xrightarrow{M^k}\Real^{n^k}\xrightarrow{s} \Real,$$ with a suitable $k$-class $\tau$. To create the unified $G$-invariant network we first lift each $F^{kj}$ to the maximal dimension $d$. That is, $\wt{F}^{kj}$ with the structure  $$\Real^n\xrightarrow{\wt{L}^{kj}} \Real^{n^d\times k}\xrightarrow{\wt{M}^k}\Real^{n^d}\xrightarrow{s} \Real.$$ This is done by composing each equivariant layer $L:\Real^{n^k\times a}\too \Real^{n^l\times b}$ with two linear equivariant operators $U^b:\Real^{n^k\times b}\too \Real^{n^d \times b}$ and $D^a:\Real^{n^d \times a}\too \Real^{n^k\times a}$, 
 	\begin{equation}\label{e:ULD}
 	U^bLD^a: \Real^{n^d\times a} \too \Real^{n^d\times b},	
 	\end{equation}
 	where
 	$$U^b(x)_{i_1\ldots i_d,j} = x_{i_1\ldots i_k,j}$$
 	and
 	$$D^a(y)_{i_1\ldots i_k,j} = n^{k-d}\sum_{i_{k+1}\ldots i_d = 1}^n y_{i_1 \ldots i_k i_{k+1} \ldots i_d,j}\ . $$
 	Since $U^b,D^a$ are equivariant, $U^bLD^a$ in \eqref{e:ULD} is equivariant. Furthermore $D^a \circ \sigma \circ U^a = \sigma$, where $\sigma$ is the pointwise activation function.  Lastly, given two $G$-invariant networks with the same tensor order $d$ they can be combined to a single $G$-invariant network by concatenating their features. That is, if $L_1:\Real^{n^d\times a}\too \Real^{n^d\times b}$, and $L_2:\Real^{n^d\times a'}\too \Real^{n^d\times b'}$, then their concatenation would yield $L_{1,2}:\Real^{n^d\times (a +a')}\too \Real^{n^d\times (b+b')}$. Applying this concatenation to all $\wt{F}^{kj}$ we get our unified $G$-invariant network. 
 \end{proof}
 
 \paragraph{Fixed-point equation for equivariant layers.}
 We have an affine operator $L:\Real^{n^k\times a} \too \Real^{n^l\times b}$ satisfying 
 \begin{equation}\label{e:supp_g_L_g}
 g^{-1}\cdot L ( g\cdot \tX) = L(\tX),
 \end{equation}
 for all $g\in G$, $\tX\in\Real^{n^k \times a}$. The purely linear part of $L$ can be written using a tensor $\tL\in\Real^{n^{k+l}\times a \times b}$: Write $$L(\tX)_{j_1 \ldots j_l ,j} = \sum_{i_1\ldots i_k, i} \tL_{j_1\ldots j_l, i_1 \ldots i_k, i, j} \tX_{i_1\ldots i_k, i}.$$
 Writing \eqref{e:supp_g_L_g} using this notation gives:
 \begin{align*}
 &\sum_{i_1\ldots i_k, i} \tL_{g(j_1)\ldots g(j_l), i_1 \ldots i_k, i, j} \tX_{g^{-1}(i_1)\ldots g^{-1}(i_k), i} \\ &= \sum_{i_1\ldots i_k, i} \tL_{g(j_1)\ldots g(j_l), g(i_1) \ldots g(i_k), i, j} \tX_{i_1\ldots i_k, i}\\
 &=
 \sum_{i_1\ldots i_k, i} \tL_{j_1\ldots j_l, i_1 \ldots i_k, i, j} \tX_{i_1\ldots i_k, i}, 
 \end{align*}
 for all $g\in G$ and $\tX\in\Real^{n^k\times a}$. This implies \eqref{e:equi_fixed_point_linear}, namely
 $$g\cdot \tL = \tL, \ g\in G.	$$
 The constant part of $L$ is done similarly.

 \end{document}